\newcommand{\figscale}{1.0}
\newcommand{\FIG}[1]{Fig.~\ref{#1}}
\newcommand{\SEC}[1]{Section~\ref{#1}}
\newcommand{\TAB}[1]{Table~\ref{#1}}
\newcommand{\ALG}[1]{Algorithm~\ref{#1}}
\newtheorem{theorem}{Theorem}
\newtheorem{lemma}{Lemma}
\newtheorem{example}{Example}
\acrodef{PFL}{pipelined federated learning}
\acrodef{MLP}{multi-layer perceptron}
\acrodef{CNN}{convolutional neural network}
\acrodef{SGD}{stochastic gradient descent}
\acrodef{LSTM}{long short-term memory}
\acrodef{RNN}{recurrent neural network}
\acrodef{iid}[i.i.d.]{independent identically distributed}
\acrodef{KKT}{Karush-Kuhn-Tucker}
\begin{document}

\title{Clustered Scheduling and Communication Pipelining For Efficient Resource Management Of Wireless Federated Learning}

\author{Cihat~Ke\c{c}eci,
	Mohammad~Shaqfeh,~\IEEEmembership{Member,~IEEE,}
	Fawaz~Al-Qahtani,~\IEEEmembership{Member,~IEEE,}
	Muhammad~Ismail,~\IEEEmembership{Senior~Member,~IEEE,}
	and~Erchin~Serpedin,~\IEEEmembership{Fellow,~IEEE}%
	\thanks{This publication was made possible by NPRP13S-0127-200182 from the Qatar National Research Fund (a member of Qatar Foundation). The statements made herein are solely the responsibility of the authors. (Corresponding author: Cihat Ke\c{c}eci.)}%
	\thanks{Cihat Ke\c{c}eci and Erchin Serpedin are with the Department of Electrical and Computer Engineering, Texas A\&M University, College Station, TX 77843, USA (e-mail: \{kececi,eserpedin\}@tamu.edu).}%
	\thanks{Mohammad Shaqfeh is with the Department of Electrical and Computer Engineering, Texas A\&M University at Qatar, Doha, Qatar (e-mail: mohammad.shaqfeh@qatar.tamu.edu).}%
	\thanks{Fawaz~Al-Qahtani is with Qatar Foundation Research, Development and Innovation RDI, Doha, Qatar (e-mail: fawalqahtani@qf.org.qa).}%
	\thanks{Muhammad Ismail is with the Department of Computer Science, Tennessee Tech University, Cookeville, TN 38505 USA (e-mail: mismail@tntech.edu).}%
}

\maketitle

\begin{abstract}
This paper proposes using communication pipelining to enhance the wireless spectrum utilization efficiency and convergence speed of federated learning in mobile edge computing applications. Due to limited wireless sub-channels, a subset of the total clients is scheduled in each iteration of federated learning algorithms. On the other hand, the scheduled clients wait for the slowest client to finish its computation. We propose to first cluster the clients based on the time they need per iteration to compute the local gradients of the federated learning model. Then, we schedule a mixture of clients from all clusters to send their local updates in a pipelined manner. In this way, instead of just waiting for the slower clients to finish their computation, more clients can participate in each iteration. While the time duration of a single iteration does not change, the proposed method can significantly reduce the number of required iterations to achieve a target accuracy. We provide a generic formulation for optimal client clustering under different settings, and we analytically derive an efficient algorithm for obtaining the optimal solution. We also provide numerical results to demonstrate the gains of the proposed method for different datasets and deep learning architectures.

% In this paper, we propose using communication pipelining to enhance the wireless spectrum utilization efficiency and speed up the convergence of federated learning in mobile edge computing applications. Due to spectrum scarcity, the number of wireless sub-channels is limited with respect to the total number of participating mobile clients. So, in each iteration of the training algorithm, a subset of the total clients is scheduled. Another problem stems from the heterogeneity in the processing and storage capabilities of mobile devices. 
% In each iteration, the clients wait for the slowest client to finish its computation.
% We propose to first cluster the clients based on the time they need per iteration to compute the local gradients of the federated learning model. Then, we schedule a mixture of clients from all clusters such that they send their local updates in a pipelined manner. In this way, instead of just waiting for the slower clients to finish their computation, more clients can participate in each iteration.
% While the time duration of a single iteration does not change, the proposed method can significantly reduce the number of required iterations to achieve a target accuracy due to the involvement of more clients per iteration.
% We provide a generic formulation for the optimal client clustering under different settings and we analytically derive an efficient algorithm for obtaining the optimal solution.
% We also provide numerical results to demonstrate the gains of the proposed method for different datasets and deep learning architectures.
\end{abstract}

\begin{IEEEkeywords}
Federated learning, spectrum efficiency, communication pipelining, clustered scheduling, mobile edge computing
\end{IEEEkeywords}

%---------------------------------------------------------
\section{Introduction}

Federated learning is a promising distributed machine learning approach in which a central server (coordinator) and a number of data-holding clients cooperate in the training of a common machine learning model without exchanging data \cite{main_FL}. Only model parameters are shared and updated using an iterative gradient descent algorithm. Hence, learning from private clients' data becomes possible while preserving the privacy of the clients'. 
Furthermore, federated learning enables achieving mutual benefits for all participating clients since training the algorithm using larger datasets by aggregating the private data of all participants leads to significant enhancements in the performance of data-hungry models. Moreover, federated learning enables better utilization of the distributed computational and storage resources. In particular, federated learning can be applied in mobile edge computing of wireless networks, e.g.~\cite{FL_Survey, Bin-Lataief_Survey}.

Research in federated learning has many aspects. One of them is exploring the variations among distributed gradient-descent training algorithms and analyzing the convergence performance of these algorithms with respect to the number of local and global iterations \cite{Adaptive_FL_JSAC}.
Another attractive research area is managing and optimizing the distributed computing and communication resources \cite{dinh2020federated}. 
In particular, one of the main challenges in federated learning is the heterogeneity across the clients in terms of their computing and data resources. 
In this work, we focus on enhancing the utilization efficiency of wireless spectrum in federated learning. 
Consequently, enhancing the spectrum utilization leads to enhancing the prediction accuracy of the learning model at the end of the training process.
Hence, achieving a target prediction accuracy becomes possible with a fewer number of iterations.

We propose using communication pipelining to enhance the spectrum utilization efficiency and speed up the convergence of federated learning. The primary motivation behind the proposed concept is that the computation time varies a lot between devices due to several reasons, such as different processor speeds or different sizes of the local dataset \cite{yang2019energy}. 
So, in an iterative gradient algorithm, the time of each iteration will be limited by the slowest client. To overcome this bottleneck, we propose that instead of just waiting for the slow clients to finish the computation of their local gradients, we can schedule an extra number of clients within the same iteration without requiring extra spectral bandwidth. A mixture of fast and slow clients should be scheduled in each iteration. The fast clients will use the spectrum first, while the slow clients are still running their computation of the local update. In this way, the number of clients who participate in each iteration can be increased while preserving the total duration of each iteration.
So, the main idea is to cluster clients based on their computation time and then schedule a balanced mixture of clients from different clusters.
The merit of this proposed concept is not necessarily in increasing the number of scheduled clients in each iteration but rather in allocating the wireless resources more efficiently. So, for example, fewer resources can be used while maintaining the number of clients per iteration. In principle, pipelining is an added degree of freedom in the resource allocation problem that can be adequately utilized to adjust the trade-off between allocated resources and the number of clients to be scheduled to achieve the best outcome of the federated learning process.

It is essential to highlight that the proposed concept of \ac{PFL} is a general concept that can be applied to any variation of the gradient descent algorithm, whether batch or stochastic, and for both primal or dual optimization methods. 
In this paper, we just show, as an example, the formulation for a primal method with client sampling in each iteration and one local iteration of full-gradient (batch) computation in each global iteration.
We refer to this method as Clustered-Scheduling Gradient-Descent (CS-GD). The extension of the concept to other cases, such as having more local iterations per client, is straightforward as long as the needed computation time per client is quantified.

\subsection{Related Work}

The application of federated learning is discussed in several studies, such as \cite{main_FL, bonawitz2019towards, li2020federated, zhang2021dual}.
A seminal work on federated learning introducing the federated averaging algorithm is proposed in \cite{main_FL}. Federated averaging algorithm aggregates the updated local weights from each client in each global iteration.  
A comprehensive survey on federated learning is provided in \cite{bonawitz2019towards, li2020federated} by discussing the arising challenges, problems, and possible solutions regarding federated learning.
The idea of federated learning is applied to the traffic prediction in \cite{zhang2021dual}.

In \cite{Wadu}, the authors investigated the problem of client scheduling and resource block (RB) allocation to enhance the performance of model training using FL.
In \cite{Tran}, the authors adopted FL in wireless networks as a resource allocation optimization problem that captures the trade-off between FL convergence wall clock time and energy consumption of UEs with heterogeneous computing and power resources.   
In \cite{Yang}, the authors proposed a novel over-the-air computation based-approach for fast global model aggregation via exploring the superposition property of a wireless multiple-access channel. This is achieved by joint device selection and beamforming design using the difference of convex function representation.
In the same direction, the authors of \cite{Zhu} proposed a novel digital version of broadband over-the-air aggregation, called one-bit broadband digital aggregation (OBDA), featuring one-bit gradient quantization followed by digital quadrature amplitude modulation (QAM) at edge devices and over-the-air majority-voting based decoding at edge server. 

Some other interesting works in the literature have considered the client scheduling problem, e.g.~\cite{Poor_TCOM, Channel_aware_Scheduling_TWC, Bandit_Scheduling, Age-based_Scheduling, W_Shi_TWC, Nishio_icc2019}.
In \cite{Poor_TCOM}, three practical scheduling policies, which are random scheduling, round Robin scheduling, and proportional fair scheduling, were compared. In \cite{Channel_aware_Scheduling_TWC}, a probabilistic scheduling framework that considers both channel quality and local update importance was proposed. In \cite{Bandit_Scheduling}, a multi-armed bandit-based framework was proposed for online client scheduling. In \cite{Age-based_Scheduling}, scheduling based on a metric called ``age of information'' was proposed, and in \cite{W_Shi_TWC}, joint client scheduling and resource allocation was proposed.
The training and wireless transmission parameters are jointly optimized in \cite{chen2020joint}.
The works mentioned above do not take into consideration the variations in computational time among the clients. This is an essential factor to consider when scheduling the clients in order to avoid wasted time waiting for clients that need extended computing times.
This is the main motivation for this work.
The computational heterogeneity of the clients was considered in \cite{Nishio_icc2019}. In this work, we develop the concept further by introducing client clustering and communication pipelining.

\subsection{Contributions}
We have three major contributions in this paper, which are summarized as follows:
\begin{itemize}
    \item We propose a novel federated learning concept \ac{PFL} in which the communication for the model parameter updates is performed in a pipelined manner. The client updates in each round are pipelined by assigning the clients into clusters by their computation time.
    
    \item We investigate the problem of optimal client clustering in a generic form that can be applicable to all particular scenarios. We show how to formulate the problem, and then we present rigorous analytical steps to derive an optimal and efficient algorithm to solve the problem. The solutions steps involve the change of variables step, Lagrangian dual-problem formulation, manipulations of the \ac{KKT} necessary and sufficient conditions for optimality, hypothesis testing approach to find the active inequality constraints, and finally, an elegant geometric interpretation that helps in obtaining an efficient algorithm to solve the optimization problem.

    \item We demonstrate the advantages of the proposed \ac{PFL} using extensive numerical examples for different federated learning tasks. We compare the gain introduced by the proposed pipelining scheme compared to the conventional federated learning scheme using the MNIST \cite{lecun2010mnist}, the federated EMNIST \cite{cohen2017emnist}, and the federated Shakespeare \cite{main_FL,shakespeare2007complete} datasets by employing a variety neural network architectures such as \ac{MLP}, \ac{CNN}, and \ac{RNN}. The simulation results suggest that employing \ac{PFL} decreases the number of communication rounds required to train a deep neural network.
\end{itemize}

The rest of the paper is organized as follows. The novel \ac{PFL} scheme is introduced in \SEC{sec:pfl}. A generic client clustering algorithm is proposed and solved in \SEC{sec:clustering}. Extensive simulations are performed to validate the proposed method in \SEC{sec:simulations}. Finally, the conclusions are drawn in \SEC{sec:conclusions}. A summary of symbols and notations is provided in \TAB{tab:nomenclature}.

\begin{table}[htbp]
    \centering
    \caption{Summary of symbols and notations}
    \label{tab:nomenclature}
    \begin{tabular}{ll}
    \hline
    Notation & Description \\ % & Notation & Description \\
    \hline
    $M$ & Total number of clients \\
    $n$ & Total number of data samples \\
    $n_m$ & Number of data samples for client $m$ \\
    $n_{\min}$ & Min. data samples per client \\
    $n_{\max}$ & Max. data samples per client \\
    $F(\mathbf{w})$ & Global loss function \\
    $F_m(\mathbf{w})$ & Loss function for client $m$ \\
    $l(\cdot)$ & Loss function for a data sample \\
    $w$ & Model weights \\
    $\mathbf{x}_{m}^{i}$ & Input values for the model \\
    $y_{m}^{i}$ & Output values for the model \\
    $T$ & Number of global iterations \\
    $\mathcal{S}[t]$ & Active subset of clients at iteration $t$ \\
    $\mathbf{w}[t]$ & Model weights at iteration $t$ \\
    $\mathbf{g}[t]$ & Average gradient \\
    $\mathbf{g}_m[t]$ & Gradient for client $m$ \\
    $\eta_t$ & Learning rate \\
    $N$ & Number of sub-channels \\
    $\tau_m$ & Computation time for client $m$ \\
    $\tau_\text{com}$ & Communication time \\
    $\tau_\text{server}$ & Server time \\
    $\tau_\text{global}[t]$ & Duration of a global iteration \\
    $\tau_\text{total}$ & Total duration of $T$ iterations \\
    $\epsilon$ & Spectral utilization efficiency \\
    $K$ & Number of clusters \\
    $\mathcal{C}_k$ & $k$th cluster \\
    $\theta_k$ & Computation time for cluster $k$ \\
    $\sigma$ & Computation time per sample \\
    $\Delta_\text{global}$ & Extra overall communication time \\
    $S_m$ & Ordered index of client $m$ \\
    $\mathcal{T}[S_m]$ & Function that maps $S_m$ to $\tau_m$ \\
    $\omega_k$ & Index of max. $\tau_m$ in cluster $k$ \\
    $\pi_k$ & Threshold for cluster $k$ \\
    $M_k$ & Number of clients in cluster $k$ \\
    $\delta_k$ & Relaxed parameter $M_k$ \\
    $|\cdot |$ & Cardinality of a set \\
    $\lfloor \cdot \rfloor$ & Floor function \\
    $[ \cdot ]$ & Nearest integer \\
    $\mathcal{L}$ & Lagrangian \\
    $\lambda_k$, $\nu$ & Lagrangian dual variables \\
    \hline
    \end{tabular}
\end{table}

%--------------------------------------------------------
\section{Pipelined Federated Learning (PFL)}
\label{sec:pfl}

A typical federated learning problem is considered with a total of $M$ clients participating in the optimization algorithm without sharing their own data sets. Each client has $n_m$ data samples, where $m$ is the index of the client. 
The total number of data samples is $n = \sum_{m=1}^{M} n_m$.
The goal is to minimize a loss function defined as:
\begin{equation}
    \label{eq:main_problem}
    \min_{\mathbf{w}\in\mathbb{R}^d} F(\mathbf{w}) \equiv \sum_{m=1}^{M} p_m F_m(\mathbf{w})
    \end{equation}
    where $p_m = \frac{n_m}{n}$, and
    \begin{equation}
    F_m(\mathbf{w}) = \frac{1}{n_m} \sum_{i=1}^{n_m} l(\mathbf{w}, \mathbf{x}_{m}^{i}, y_{m}^{i})
\end{equation}
where $l(\mathbf{w}, \mathbf{x}_{m}^{i}, y_{m}^{i})$ is the loss function for the model and $w$, $\mathbf{x}_{m}^{i}$, and $y_{m}^{i}$ are the model weights, the input values, and the output values, respectively.

The model training mechanism is based on a number $T$ of iterations. In each iteration $t$, the server selects a sub-set of the total clients $\mathcal{S}[t]$ and broadcasts an updated model $\mathbf{w}[t]$ to the selected clients. The selected clients calculate the local gradient using their data samples and send the gradient to the server. 
A local gradient computation, at a given global round $t$, is based on averaging the gradient of all data points in the data set of the client:
\begin{equation}
    \label{eq:local_gradient}
    \mathbf{g}_m[t] = \nabla F_m(\mathbf{w}[t])=\frac{1}{n_m}\sum_{i=1}^{n_m} \nabla l(\mathbf{w}[t],\mathbf{x}_{m}^{i}, y_{m}^{i})
\end{equation}
Then the server averages the gradients 
\begin{equation}
    \label{eq:gradients_averaging}
    \mathbf{g}[t] = \frac{\sum_{m\in\mathcal{S}[t]}n_m \mathbf{g}_m[t]}{\sum_{m\in\mathcal{S}[t]}n_m}
\end{equation}
and updates the model according to:
\begin{equation}
    \label{eq:global_update}
    \mathbf{w}[t+1]=\mathbf{w}[t] - \eta_t \mathbf{g}[t]
\end{equation}
where $\eta_t > 0$ is the learning rate in the $t$-th iteration.

The uplink communication channel is considered to be an OFDM channel with a total of $N$ sub-channels. 
So, at a given time, no more than $N$ users can communicate with the server.
Each sub-channel consists of multiple sub-carriers, and the data packets may cover multiple OFDM frames in order to fit the exchanged gradient data between the clients and the server. The number of gradients may be in the order of a million since the deep learning architectures usually have a large number of trainable variables.
Furthermore, the total number of clients is assumed to be much larger than the number of sub-channels $N \ll M$.
Therefore, the server should randomly select a subset of the clients at each iteration based on the available communication resources (i.e., the number of sub-channels). 

The computation time to obtain the local gradients varies among the clients for several reasons, such as different processor speeds and different computation loads. Note that the computation load to calculate \eqref{eq:local_gradient} depends on the total number of local data samples $n_m$. So, even if two clients have the same processor speed, a larger dataset will need a larger computation time than a client with a smaller dataset. This variation in computation time represents a practical consideration.
On the other hand, since the packet size for communicating a local update to the server is the same across all clients (and it depends on the size of the gradient vector $\mathbf{g}$), the communication time is considered to be the same for all clients.
We know that wireless channels’ quality differs among different users due to small-scale and large-scale fading. So, some studies in the literature have taken the channel quality into consideration in the selection of the clients per iteration \cite{Poor_TCOM, Channel_aware_Scheduling_TWC}. However, in this work, we do not want to compromise machine learning best practices for other advantages related to opportunistic communication in order to enhance transmission data rate. Hence, we design a system that can give all clients an almost equal chance to be selected. In particular, we aim to use the largest possible amount of data in training by involving all clients and learning a model that is not biased towards the data of only a subset of the clients who get selected more frequently than others.
In our communication scheme, we assume that the communication over the sub-channels applies a fixed rate that is independent of the particularly selected user. We assume that the fixed transmission rate can be reliably decoded by all users. This is a common scheme in practical communication systems. The wireless channel data rate efficiency could be enhanced by applying adaptive modulation and coding based on the channel conditions. However, this is out of the scope of this manuscript.
We denote the computation time for client $m$ as $\tau_m$, the communication time of every client as $\tau_\text{com}$, and the combined time for the server to update the global model and broadcast it to the clients alongside information about the scheduled clients for next round as $\tau_\text{server}$. 
In order for the server to update the global model and broadcast the new model $\mathbf{w}[t]$ to the clients, it must wait for all local updates.
So, the bottleneck will be to wait for the slowest client until it finishes its local computation.
Therefore, the duration of one global iteration $t$ is given as:
\begin{equation}
    \label{eq:time_one_iteration}
    \tau_\text{global}[t] = \tau_\text{server}+\max_{m\in\mathcal{S}[t]}\tau_m+\tau_\text{com}
    \end{equation}
    and the total time to complete the whole training process over $T$ iterations is given as:
    \begin{equation}
    \tau_\text{total}=\sum_{t=1}^T \tau_\text{global}[t] = T \left(\tau_\text{server}+\tau_\text{com}\right) +\sum_{t=1}^T  \max_{m\in\mathcal{S}[t]}\tau_m
\end{equation}

One might think of excluding the clients with large values of $\tau_m$ to speed up the training process. However, this is not a good option since those clients are likely to be the clients with the largest datasets, and hence, their local updates have higher weights (i.e., they have higher importance).
Furthermore, it is advantageous to involve more data samples in the training process in order to enhance the overall learning accuracy.
So, client exclusion is not recommended. Instead, client scheduling in each global iteration should be based on random sampling so that each client gets a chance to be involved in each global iteration.
On the other hand, just waiting for the slowest client to finish its computation is a waste of the valuable spectral resources since the utilization efficiency $\epsilon$ of the $N$ uplink sub-channels will be
\begin{equation}
    \epsilon = \frac{T \tau_\text{com}}{\tau_\text{total}}  \approx \frac{\tau_\text{com}}{\tau_\text{com}+\tau_\text{server}+\max_m \tau_m}
\end{equation}

In this work, we propose to enhance the spectrum utilization without the need to exclude any client by scheduling a mixture of slow and fast clients in each iteration and making them use the available channels in a pipelined manner. So, the clients are grouped into a number $K$ of clusters based on their computation speed. More details about how to cluster the clients will be given in the next section. In each iteration, the server randomly samples $N$ clients from each cluster. In this way, the total number of clients per iteration will be $K N$. The proposed \ac{PFL} as illustrated in \FIG{fig:protocol}.
\begin{figure*}
    \centering
    \includegraphics[width=0.75\linewidth]{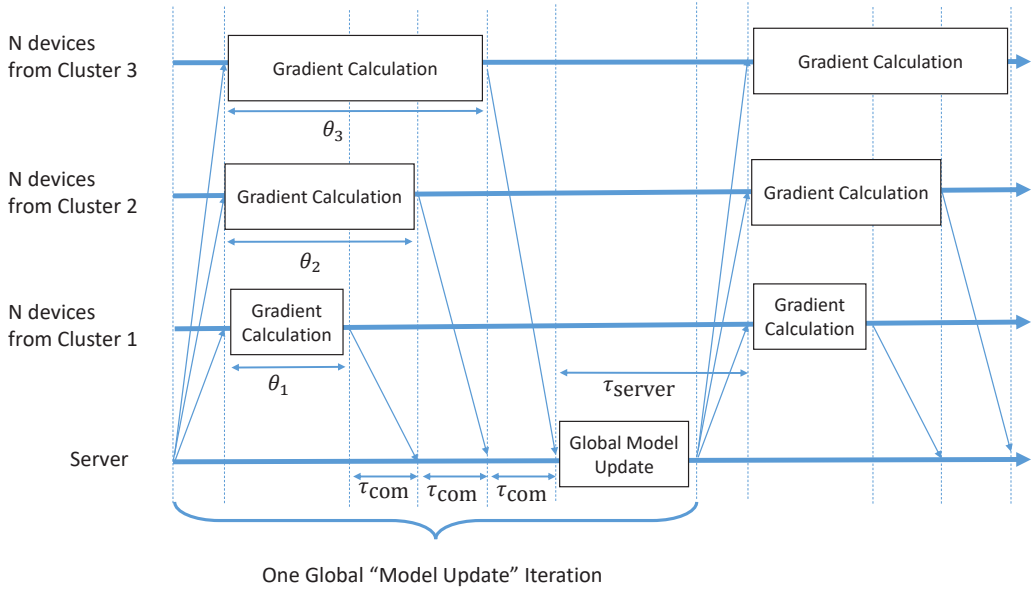}
    \caption{One global update round of CS-GD consisting of pipelined communication rounds. $\theta_k$ where $k=1,2,3$ denotes the computation time for the respective cluster. Each cluster sends the updated parameters in the next communication interval without waiting for the slower clusters.}
    \label{fig:protocol}
\end{figure*}
The utilization efficiency of the wireless sub-channels based on our proposal becomes:
\begin{equation}
    \epsilon_\text{PFL} = \frac{TK \tau_\text{com}}{\tau_\text{total}}  \approx \frac{K\tau_\text{com}}{\tau_\text{com}+\tau_\text{server}+\max_m \tau_m}
    \approx K \epsilon
\end{equation}

Thus, \ac{PFL} does not only produce a significant increase in the utilization efficiency of the spectrum resources, but it does also improve the performance
of the learning algorithm since adding more clients in each iteration will make it converge faster into the globally optimal solution of \eqref{eq:main_problem}.
So, while \ac{PFL} does not decrease the time per iteration \eqref{eq:time_one_iteration}, it can help to achieve a targeted accuracy level for solving \eqref{eq:main_problem}
with less number of iterations $T$. So, it does indeed speed up the convergence performance of the learning algorithm.
This will be demonstrated in the numerical examples section.

%---------------------------------------------------------------------------------------------
\section{Client Clustering}
\label{sec:clustering}

Clients are grouped into $K$ clusters, denoted as $\mathcal{C}_1$, $\mathcal{C}_2$, $\cdots$, $\mathcal{C}_K$. Each client is placed in just one cluster. So, $\sum_{k=1}^K |\mathcal{C}_k|=M$. The computation time for a given cluster $k$ is denoted as $\theta_k$. This represents the time duration given to the selected clients in the cluster
to complete their gradient computations. Then, they synchronously get scheduled to access the communication channels at time $\theta_k$, for a communication duration $\tau_\text{com}$, as illustrated in Fig.~\ref{fig:protocol}.

Clustering the clients has some flexibility and it can be done in several ways. However, there are two strict constraints to be taken into consideration and two favorable, but not strict, requirements. First, all clients in a given cluster must satisfy the constraint that 
\begin{equation}
    \label{eq:constraint1}
    \tau_m \leq \theta_k, \quad \forall  m\in\mathcal{C}_k, \quad k=1,\dots,K.
\end{equation}
Furthermore, we have a constraint for scheduling the clusters into consequent communication frames. Assuming that the clusters are ordered in an ascending way based on their $\theta_k$, the constraint is given as
\begin{equation}
    \label{eq:constraint2}
    \theta_k - \theta_{k-1} \geq \tau_\text{com}, \quad k=2,\dots,K.
\end{equation}
A favorable, but not strict, requirement is to make the sizes of the clusters (i.e., number of clients) almost equal in order to give equal chance for all clients to participate in the training process. Hence, we have
\begin{equation}
    \label{eq:requirement}
    |\mathcal{C}_k| \approx \frac{M}{K}, \quad \forall k=1,\dots,K.
\end{equation}
Another favorable requirement is to satisfy that
\begin{equation}
 \label{eq:requirement2}
    |\mathcal{C}_k| \geq N, \quad \forall k=1,\dots,K,
\end{equation}
in order to fully utilize the wireless resources to the maximum possible efficiency.

The Clustered-Scheduling Gradient-Descent (CS-GD) algorithm is summarized in Algorithm~\ref{alg:csgd}.
This is a generic algorithm that can be applied for any arbitrary client clustering method given that \eqref{eq:constraint1} and \eqref{eq:constraint2} are satisfied. 
Next, we elaborate on the optimization of clustering based on \eqref{eq:requirement} while maintaining the other two constraints in \eqref{eq:constraint1} and \eqref{eq:constraint2}.

\begin{algorithm}[htbp]
\caption{Clustered-Scheduling Gradient-Descent (CS-GD)}
\label{alg:csgd}
\begin{algorithmic}[1]
\renewcommand{\algorithmicrequire}{\textbf{Input:}}
\renewcommand{\algorithmicensure}{\textbf{Output:}}
\REQUIRE The number of sub-channels $N$; the clients' clusters $\mathcal{C}_1$, $\mathcal{C}_2$, $\cdots$, $\mathcal{C}_K$;
the initial model parameters $\mathbf{w}[0]$; the number of global iterations $T$; the learning rate per iteration $\eta_t$ for $t=1,2,\cdots,T$;
the number of data samples per client $n_m, \forall m$.
\ENSURE  The final model $\mathbf{w}[T]$
%\\ \textit{Initialisation} :
%\STATE first statement
%\\ \textit{LOOP Process}
\FOR {global round $t = 1$ to $T$}
    \FOR {cluster $k = 1$ to $K$}
        \STATE Uniformly sample $\mathcal{C}_k$ to obtain a set of $N$ clients, denoted by $\mathcal{S}_k[t]$
        \STATE $\mathcal{S}[t] = \mathcal{S}[t] \cup \mathcal{S}_k[t]$
        \FOR {each client $m$ in $\mathcal{S}_k[t]$}
            \STATE Compute the gradient $\mathbf{g}_m[t]$ according to \eqref{eq:local_gradient}
            \STATE Send gradient $\mathbf{g}_m[t]$ to the server according to the pipeline timing $\theta_k$ of its cluster
        \ENDFOR
    \ENDFOR
    % \STATE $\mathcal{S}[t] = \bigcup_{k=1}^K \mathcal{S}_k[t]$
    \STATE The server aggregates the gradients according to \eqref{eq:gradients_averaging}, and updates the model according to \eqref{eq:global_update}
    %  \IF {($i \ne 0$)}
    % \STATE statement..
    % \ENDIF
\ENDFOR
%\RETURN $P$ 
\end{algorithmic} 
\end{algorithm}

%---------------------------------------------------------------------------------------------
\subsection{Generic Optimization Problem Formulation}

First, we sort the clients in ascending order of their computation time $\tau_m$, and assign the integer index $S_m$ for the order of client $m$.
Hence, if $S_i>S_j$, then $\tau_i \geq \tau_j, \forall S_i, S_j \in \lbrace 1,2,\dots,M \rbrace$. Define the discrete function $\mathcal{T}$ for the ordered computation times of the clients as
\begin{equation}
    \mathcal{T}[S_m] = \tau_m, \quad S_m = 1, 2, \dots, M.
\end{equation}
By definition, $\mathcal{T}[S_m]$ is monotonically non-decreasing.
Here, we would like to provide a generic optimization framework for client clustering that is applicable for any arbitrary monotonically non-decreasing $\mathcal{T}$.
Extension into some special cases will be straightforward.
An example of special cases is the case when the computation time is uniformly distributed among the clients between $\tau_\text{min}$ and  $\tau_\text{max}$,
which results in
\begin{equation}
    \mathcal{T}[i]=\tau_\text{min} + \frac{\tau_\text{max}-\tau_\text{min}}{M-1}(i-1), \quad i \in \lbrace 1,2,\dots,M \rbrace.
\end{equation}
Next, we use the notation $\omega_k$ to denote the index $S_m$ of the client that has the largest computation time among all clients in cluster $\mathcal{C}_k$. Hence, we have
\begin{equation}
    \omega_k = \max_{m \in \mathcal{C}_k} S_m, \quad S_m \in \lbrace 1,2,\dots,M \rbrace.
\end{equation}
Based on the introduced notations for sorting the clients by their computation time, clustering the clients will be based on the following rule:
\begin{equation}
    \label{eq:cluster_set}
    \mathcal{C}_k = \{m~:~\omega_{k-1} < S_m \leq \omega_k\}, \; \forall k\in \lbrace 1,2,\dots,K \rbrace,
\end{equation}
where $\omega_{K}=M$, and $\omega_{0}=0$.
Based on \eqref{eq:cluster_set}, it is straightforward to show that
% \begin{equation}
$|C_k| = \omega_k - \omega_{k-1}$.
% \end{equation}
Now, we can formulate the optimization problem to reduce the variance among the number of clients per cluster since it is favorable to have clusters with almost equal sizes, according to \eqref{eq:requirement}
\begin{subequations}
\label{eq:main_optimization}
\begin{alignat}{3}
    \min_{\omega_1,\dots,\omega_{K-1}} & \quad && \sum_{k=1}^{K} \left( \left( \omega_{k}-\omega_{k-1} \right) - \frac{M}{K} \right)^2  \label{eq:objective_main}\\
    \text{subject to} & && \mathcal{T}[\omega_k] \leq \theta_k,  \quad \forall k\in \lbrace 1,2,\dots,K-1 \rbrace, \label{eq:threshold_main} \\
    & && \omega_{K}=M, \label{eq:all_clients}
\end{alignat}
\end{subequations}
where the constraint \eqref{eq:threshold_main} is based on \eqref{eq:constraint1}. We do not need to include $\omega_K$ as an optimization variable, but rather we added it as
a constraint in \eqref{eq:all_clients} to enforce the optimal solution to include all clients in the clustering method and do not exclude any client.

Before we discuss the solution steps of \eqref{eq:main_optimization}, we will comment first on the selection of constrained thresholds $\theta_k$ in \eqref{eq:threshold_main}.
First, introduce an extra hyper-parameter in the problem formulation to allow for some extra overall time that can be added at each global iteration. The added extra time is denoted by $\Delta_\text{global}$, where $\Delta_\text{global} \geq 0$.
The purpose of this added hyper-parameter is to enable more flexibility in adjusting the number of clients per cluster and consequently to enable reducing the variations among the clusters' sizes.
Furthermore, the extra time $\Delta_\text{global}$ can also be useful in enabling having a larger number of clusters $K$ if needed.
However, there is a trade-off here since adding more time per iteration will reduce the spectrum utilization efficiency to become
 \begin{equation}
    \epsilon = \frac{K \tau_\text{com}}{\tau_\text{com} + \tau_\text{server} + \max\tau_{m} + \Delta_\text{global}}
\end{equation}

We consider $\Delta_\text{global}$ and the total number of clusters $K$ to be hyper-parameters that are set in advance before solving \eqref{eq:main_optimization}.
The selection of $\Delta_\text{global}$ will affect the values of the thresholds $\theta_k$ in the optimization problem constraint \eqref{eq:threshold_main}. 
We can start by examining $\theta_K$ of the last cluster, i.e., the one with largest computation time, which can be adjusted as
% \begin{equation}
$\theta_K=\tau_\text{max}+\Delta_\text{global}$,
% \end{equation}
and then we apply \eqref{eq:constraint2} with equality to adjust $\theta_k$
for clusters $K-1$, $K-2$, $\cdots$, $1$ sequentially. As a result, we will have
\begin{equation}
    \label{eq:setting_theta}
    \theta_k = \tau_\text{max} + \Delta_\text{global} - (K - k) \tau_\text{com}, \quad k=K,\dots,1.
\end{equation}
It should be noted that using the procedure mentioned above to adjust the values of the thresholds $\theta_k$ in \eqref{eq:setting_theta} aims at setting these thresholds
at the maximum possible values in order to enlarge the space of feasible solutions of the main optimization problem \eqref{eq:main_optimization}, and hence, to enable achieving a lower
value of the objective function \eqref{eq:objective_main} at the optimal solution of the problem.  

The upper bound of the possible number of clusters $K$ can be obtained by setting $\theta_1 \geq  \tau_\text{min}$, which can be simplified as $ \tau_\text{max} + \Delta_\text{global} - (K - 1) \tau_\text{com} \geq \tau_\text{min}$.
This inequality is equivalent to
\begin{equation*}
K \leq \frac{\tau_\text{max}-\tau_\text{min}+\Delta_\text{global}}{\tau_\text{com}}+1.
\end{equation*}
Furthermore, since the number of clusters is an integer, we can write
\begin{equation}
    \label{eq:techincal_K}
    K \leq \Big\lfloor \frac{\tau_\text{max}-\tau_\text{min}+\tau_\text{com}+\Delta_\text{global}}{\tau_\text{com}} \Big\rfloor 
\end{equation}
where $\lfloor x \rfloor$ is the floor (rounding to highest lower integer) of $x$.

The above upper bound of $K$ is theoretically correct. However, it is not plausible in practice since it is based on assuming $\theta_1 \geq  \tau_\text{min}$ is satisfied at
strict equality. This means that the first cluster will have just a single client that has the lowest computation time among all clients. This violates \eqref{eq:requirement}. 
Therefore, we alternatively prefer, from practical perspectives, to adjust $K$ by reducing the upper bound by one, which gives  
\begin{equation}
    \label{eq:choosing_K}
    K = \Big\lfloor \frac{\tau_\text{max}-\tau_\text{min}+\Delta_\text{global}}{\tau_\text{com}} \Big\rfloor.
\end{equation}

%---------------------------------------------------------------------------------------------
\subsection{Optimal Solution}
\label{sec:clustering-solution}

It is not straightforward to solve the primal optimization problem in \eqref{eq:main_optimization} analytically since the objective function will
have non-linear terms after expanding the square in \eqref{eq:objective_main}. Therefore, we propose here a change of variable step to transform the problem
formulation into a convex optimization problem. We use the number of clients per cluster $|\mathcal{C}_k|$, denoted as $M_k$, where $k = 1, 2, \dots, K$, as the optimization variables
instead of $\omega_k$. Thus, $M_k = \omega_k - \omega_{k-1}$.
Since the optimization variables $M_k$ are integers, we relax the problem into the continuous domain by replacing $M_k$ by $\delta_k$, which is a real number. The resulting reformulated optimization problem will be
\begin{subequations}
\label{eq:main_optimization3}
\begin{alignat}{3}
    \min_{\delta_{1},\dots,\delta_{K}} & \quad && \sum_{k=1}^{K} \left( \delta_k - \frac{M}{K} \right)^2  \label{eq:objective_main3}\\
    \text{subject to} & && \sum_{i=1}^{k} \delta_i \leq \pi_k,  \quad \forall k\in \lbrace 1, 2, \dots, K-1 \rbrace, \label{eq:threshold_main3} \\
    & && \sum_{i=1}^{K} \delta_i =M. \label{eq:all_clients3}
\end{alignat}
\end{subequations}
% We can then reformulate \eqref{eq:main_optimization} as
% \begin{subequations}
% \label{eq:main_optimization2}
% \begin{alignat}{3}
%     \min_{M_{1},\dots,M_{K}} & \quad && \sum_{k=1}^{K} \left( M_k - \frac{M}{K} \right)^2  \label{eq:objective_main2}\\
%     \text{subject to} & && \sum_{i=1}^{k} M_i \leq \pi_k,  \quad \forall k\in \lbrace 1, 2, \dots, K-1 \rbrace, \label{eq:threshold_main2} \\
%     & && \sum_{i=1}^{K} M_i =M, \label{eq:all_clients2}
% \end{alignat}
% \end{subequations}
where the thresholds $\pi_k$ satisfy 
$\pi_k = |\{m : \tau_m \leq \theta_k\}|$,
which can be equivalently written as
\begin{equation}
\label{eq:pi}
    \pi_k = \max S_m \quad \text{subject to } \mathcal{T}[S_m] \leq \theta_k.
\end{equation}
By solving \eqref{eq:main_optimization3} for $M_k$, $k = 1, \dots, K$, we can then easily obtain $\omega_{k}=\sum_{i=1}^{k} M_i$, for $k = 1, \dots, K$. Then, we can apply clustering using \eqref{eq:cluster_set}.
% Since the optimization variables $M_k$ in \eqref{eq:main_optimization2} are integers, we relax the problem into the continuous domain by replacing $M_k$ by $\delta_k$, which is a real number. The resulting reformulated optimization problem will be
% \begin{subequations}
% \label{eq:main_optimization3}
% \begin{alignat}{3}
%     \min_{\delta_{1},\dots,\delta_{K}} & \quad && \sum_{k=1}^{K} \left( \delta_k - \frac{M}{K} \right)^2  \label{eq:objective_main3}\\
%     \text{subject to} & && \sum_{i=1}^{k} \delta_i \leq \pi_k,  \quad \forall k\in \lbrace 1, 2, \dots, K-1 \rbrace, \label{eq:threshold_main3} \\
%     & && \sum_{i=1}^{K} \delta_i =M. \label{eq:all_clients3}
% \end{alignat}
% \end{subequations}
Problem \eqref{eq:main_optimization3} is a convex optimization problem and it can be solved analytically. After obtaining $\delta_k$, $k = 1, \dots, K$, we can obtain $M_k$
for $k = 1, \dots, K$ by rounding to the nearest integer.
We can also directly obtain $\omega_k$ ($k=1,\dots,K-1$) of the primal optimization problem \eqref{eq:main_optimization} by using
\begin{equation*}
% \label{eq:from_delta_to_omega}
\omega_k = \left[ \sum_{i=1}^{k} \delta_i \right], \quad
k=1,2,\dots,K-1,
\end{equation*}
where $[ x ]$ is the rounding operator to the nearest integer of $x$. It should be noted that rounding to lowest higher integer should not cause any concern regarding 
violating the inequality constraints in \eqref{eq:threshold_main3} or equivalently in \eqref{eq:threshold_main}, since the thresholds $\pi_k$ in \eqref{eq:threshold_main3} are
by definition integers. So, if $\sum_{i=1}^{k} \delta_i$ is not an integer, rounding this summation to the lowest higher integer will also satisfy the constraint.
When we obtain a relaxed (continuous-valued) number of clients per cluster, it is clear that the optimal value in the discrete domain should be by rounding the relaxed number of clients to the nearest integer. Any other value will give a higher variance. Furthermore, rounding to the lower or upper integer will give equal value. So, for example, if we have 7 users and want to distribute them into two groups. The optimal relaxed value will be 3.5 per group. If we round this value, we could have the first group has 4 and the second group has 3, or the first group has 3 and the second group has 4. Both solutions have exactly the same variance. So, in principle, rounding the nearest integer will always be optimal, although it may not be the unique optimal solution.
The Lagrangian dual problem is given by
\begin{subequations}
\label{eq:dual_optimization}
\begin{alignat}{2}
    \min \quad & \mathcal{L}(\delta_{1},\dots,\delta_{K},\lambda_1,\dots,\lambda_{K-1},\nu) \label{eq:objective_dual}\\
    \text{subject to}  \quad & \lambda_i \geq 0,  \quad \forall k\in \lbrace 1, 2, \dots, K-1 \rbrace, \label{eq:threshold_dual} 
\end{alignat}
\end{subequations}
where the Lagrangian $\mathcal{L}$ in \eqref{eq:objective_dual} can be written as
\begin{equation}
\label{eq:Lagrangian}
\begin{split}
\mathcal{L}= & \sum_{k=1}^{K} \left(\delta_k - \frac{M}{K}\right)^2 
+ \sum_{k=1}^{K-1} \lambda_k \left(\sum_{i=1}^{k} \delta_i - \pi_k\right) \\ 
& - \nu \left(\sum_{i=1}^{K} \delta_i - M\right),
\end{split}
\end{equation}
% \begin{equation}
% \label{eq:Lagrangian}
% \mathcal{L} = \sum_{k=1}^{K} \left(\delta_k - \frac{M}{K}\right)^2 
% + \sum_{k=1}^{K-1} \lambda_k \left(\sum_{i=1}^{k} \delta_i - \pi_k\right)
% - \nu \left(\sum_{i=1}^{K} \delta_i - M\right),
% \end{equation}
where $\lambda_k$ ($k=1, \dots, K-1$) are the dual variables associated with the inequality constraints of the primal problem \eqref{eq:threshold_main3},
and $\nu$ is the dual variable associated with the equality constraint of the primal problem \eqref{eq:all_clients3}.

Based on the \ac{KKT} necessary and sufficient conditions for optimality \cite{boyd_book}, the optimal solution of our problem must satisfy the following:
\begin{enumerate}
    \item The primal equality constraint  \eqref{eq:all_clients3},
    \item The primal inequality constraints in \eqref{eq:threshold_main3}, which are $K-1$ inequalities in total,
    \item The dual constraints in \eqref{eq:threshold_dual}, which are also $K-1$ inequalities,
    \item The complementary slackness, given by
    \begin{equation}
    \label{eq:complementary_slackness}
    \lambda_k \left(\sum_{i=1}^{k} \delta_i - \pi_k\right) = 0, \quad \forall k\in \lbrace 1,2,\dots,K-1 \rbrace,
    \end{equation}
    which are $K-1$ equations related to the previous two conditions,
    \item Vanishing of the gradient of the Lagrangian with respect to the primal optimization variables,
    \begin{equation}
    \label{eq:vanishing_gradient}
    \frac{\partial \mathcal{L}}{\partial \delta_k} = 0, \quad \forall k\in \lbrace 1,2,\dots,K \rbrace,
    \end{equation}
    which gives $K$ equations in total.
\end{enumerate}
Furthermore, since the problem is convex, the solution that satisfies all \ac{KKT} conditions is unique.
To solve this system of equations and equalities, we will first perform some analytical manipulations based on these conditions. Then,
we will develop a hypothesis testing approach to get to the optimal solution in an efficient and quick way.
Based on the \ac{KKT} conditions, we can characterize the optimal solution as follows:
\begin{lemma}[Primal Variables and Lagrangian Multipliers]
\label{lem:Variables}

The optimal solution must satisfy
\begin{equation}
    \label{eq:subsequent}
    \delta_k = \delta_{k-1} + \frac{1}{2}\lambda_{k-1}, \quad \forall k=2, \dots, K.
\end{equation}
\end{lemma}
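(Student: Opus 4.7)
The plan is to read off the Karush–Kuhn–Tucker stationarity condition~\eqref{eq:vanishing_gradient} explicitly for each $\delta_k$ and then subtract two consecutive equations, at which point the shared quadratic target $M/K$ and the dual variable $\nu$ drop out and only a single $\lambda_{k-1}$ remains.

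The first step is to compute $\partial\mathcal{L}/\partial\delta_k$ from~\eqref{eq:Lagrangian}. The quadratic term contributes $2(\delta_k-M/K)$. For the inequality-constraint sum, I note that $\delta_k$ appears in the inner sum $\sum_{i=1}^{j}\delta_i$ if and only if $j\geq k$, so the contribution is $\sum_{j=k}^{K-1}\lambda_j$ (with the convention that this sum is empty, hence zero, when $k=K$). The equality-constraint term contributes $-\nu$. Setting the derivative to zero yields
\begin{equation*}
2\Bigl(\delta_k-\tfrac{M}{K}\Bigr)+\sum_{j=k}^{K-1}\lambda_j-\nu=0,\qquad k=1,\dots,K.
\end{equation*}

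The second step is to write this identity for indices $k$ and $k-1$ (both of which lie in $\{1,\dots,K\}$ when $k\in\{2,\dots,K\}$) and subtract. The terms $2\cdot M/K$ and $\nu$ cancel, and the two telescoping tails $\sum_{j=k}^{K-1}\lambda_j$ and $\sum_{j=k-1}^{K-1}\lambda_j$ differ by exactly the single summand $\lambda_{k-1}$. This gives
\begin{equation*}
2(\delta_k-\delta_{k-1})-\lambda_{k-1}=0,
\end{equation*}
which, after dividing by $2$ and rearranging, is precisely \eqref{eq:subsequent}. Both endpoints are covered automatically: for $k=K$ the empty sum convention handles the missing $\lambda_K$, and for $k=2$ the relation just links $\delta_2$ to $\delta_1$ via $\lambda_1$.

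I do not expect any real obstacle here; the whole proof is essentially a bookkeeping exercise, and the only place where one could slip is in correctly identifying the range $j\geq k$ when differentiating the nested sum $\sum_{j=1}^{K-1}\lambda_j\sum_{i=1}^{j}\delta_i$. Once that indexing is fixed, telescoping delivers the lemma in one line, and no use of the complementary slackness conditions~\eqref{eq:complementary_slackness} or the primal/dual feasibility conditions is required at this stage — those will enter only later when determining which multipliers $\lambda_{k-1}$ are active.
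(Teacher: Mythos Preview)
Your proposal is correct and follows essentially the same approach as the paper: compute the stationarity condition $\partial\mathcal{L}/\partial\delta_k=0$ to obtain $2(\delta_k-M/K)+\sum_{j=k}^{K-1}\lambda_j-\nu=0$, then difference consecutive equations so that the $M/K$, $\nu$, and telescoping tail cancel, leaving $2(\delta_k-\delta_{k-1})=\lambda_{k-1}$. The paper's only cosmetic difference is that it first isolates $\delta_k$ explicitly (its equation~\eqref{eq:optimal_delta}) before taking the difference, whereas you subtract the raw stationarity equations directly; the content is identical.
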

\begin{proof}[Proof:\nopunct]
From \eqref{eq:vanishing_gradient} we obtain
\begin{equation*}
    \frac{\partial \mathcal{L}}{\partial \delta_k} = 2 \left(\delta_k - \frac{M}{K}\right) + \sum_{i=k}^{K-1} \lambda_i - \nu = 0,
\end{equation*}
which can be re-arranged to obtain
\begin{equation}
    \label{eq:optimal_delta}
    \delta_k = \frac{M}{K} - \frac{1}{2} \sum_{i=k}^{K-1} \lambda_i + \frac{1}{2} \nu, \quad k=1, \dots, K.
\end{equation}
Based on \eqref{eq:optimal_delta}, and by evaluating $\delta_k - \delta_{k-1}$, we obtain \eqref{eq:subsequent}. 
\end{proof}

\begin{lemma}[Ordered Subsequent Clusters]
\label{lem:subsequent}
The optimal solution must satisfy
\begin{equation}
    \label{eq:optimality_check}
    \delta_k \geq \delta_{k-1},  \quad \forall k=2, \dots, K.
\end{equation}
\end{lemma}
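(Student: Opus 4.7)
The plan is to derive this lemma as an immediate consequence of Lemma~\ref{lem:Variables} combined with the dual feasibility condition from the KKT system. Lemma~\ref{lem:Variables} already gives the explicit identity $\delta_k - \delta_{k-1} = \tfrac{1}{2}\lambda_{k-1}$ for $k = 2, \dots, K$, so the entire content of the present lemma reduces to showing that the right-hand side is nonnegative.

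The key step is to invoke the dual feasibility constraint \eqref{eq:threshold_dual}, which asserts $\lambda_k \geq 0$ for every $k \in \{1, 2, \dots, K-1\}$ as part of the KKT necessary conditions. Substituting $\lambda_{k-1} \geq 0$ into the identity from Lemma~\ref{lem:Variables} yields $\delta_k - \delta_{k-1} = \tfrac{1}{2}\lambda_{k-1} \geq 0$, which is exactly \eqref{eq:optimality_check}. So the proof is essentially a one-line chain of implications: KKT stationarity gives the telescoping relation, and KKT dual feasibility supplies the sign.

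There is no real obstacle here; the lemma is best viewed as a structural observation that packages what will be used later rather than a substantive calculation. I would write the proof in two displayed lines at most, citing Lemma~\ref{lem:Variables} for the equality and the dual feasibility constraint \eqref{eq:threshold_dual} for the inequality, and then conclude. The value of stating this as a separate lemma is presumably that the ordering $\delta_1 \leq \delta_2 \leq \dots \leq \delta_K$ will be needed in the subsequent hypothesis-testing argument for identifying active constraints, where the monotonicity of the relaxed cluster sizes matches the monotonicity of the thresholds $\pi_k$ induced by the non-decreasing function $\mathcal{T}$.
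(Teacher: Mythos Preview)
Your proposal is correct and follows exactly the same approach as the paper: both invoke Lemma~\ref{lem:Variables} to obtain $\delta_k - \delta_{k-1} = \tfrac{1}{2}\lambda_{k-1}$ and then use the dual feasibility constraint \eqref{eq:threshold_dual} ($\lambda_{k-1} \geq 0$) to conclude nonnegativity. The paper's proof is in fact the one-line argument you describe.
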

\begin{proof}[Proof:\nopunct]
This is a direct consequence of Lemma~\ref{lem:Variables}  
since the Lagrangian multipliers $\lambda_k$ are non-negative based on the dual constraint \eqref{eq:threshold_dual}. 
\end{proof}

\begin{lemma}[Two Possibilities Per Complementary Slackness Condition]
\label{lem:two_possibilities}
The optimal solution must satisfy that either 
\begin{subequations}
    \label{eq:lemma2}
    \begin{alignat}{2}
        & \delta_k  = \delta_{k+1}, \quad  && \text{if}\quad  \lambda_k=0, \quad \forall k\in \lbrace 1,2,\dots,K-1 \rbrace, \label{eq:case1}\\
      & \text{or}&& \nonumber \\
       & \delta_k  = \pi_k - \sum_{i=1}^{k-1} \delta_i, \quad  && \text{if} \quad \lambda_k > 0, \quad \forall k\in \lbrace 1,2,\dots,K-1 \rbrace. \label{eq:case2} 
    \end{alignat}
\end{subequations}
% $\forall k\in \lbrace 1,2,\dots,K-1 \rbrace$.
\end{lemma}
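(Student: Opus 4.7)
The plan is to derive Lemma~3 as a direct corollary of the complementary slackness condition \eqref{eq:complementary_slackness} combined with Lemma~\ref{lem:Variables}. Since complementary slackness gives $\lambda_k(\sum_{i=1}^k \delta_i - \pi_k)=0$ for each $k \in \{1,\dots,K-1\}$, at least one factor must vanish, and the non-negativity of $\lambda_k$ from \eqref{eq:threshold_dual} lets me split cleanly into the two cases $\lambda_k=0$ and $\lambda_k>0$.

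First, I would treat the case $\lambda_k=0$. Here complementary slackness is trivially satisfied, so I cannot extract information from it directly; instead I invoke Lemma~\ref{lem:Variables}, which states $\delta_{k+1} = \delta_k + \tfrac{1}{2}\lambda_k$. Substituting $\lambda_k=0$ immediately yields $\delta_{k+1} = \delta_k$, giving \eqref{eq:case1}. Note that this uses the identity from Lemma~\ref{lem:Variables} with the index shifted by one, which is legitimate for all $k \in \{1,\dots,K-1\}$ because $k+1 \in \{2,\dots,K\}$ lies in the range where \eqref{eq:subsequent} applies.

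Next, I would handle the case $\lambda_k>0$. Since the product $\lambda_k\bigl(\sum_{i=1}^k \delta_i - \pi_k\bigr)$ must be zero and $\lambda_k$ is strictly positive, the other factor must vanish, giving $\sum_{i=1}^k \delta_i = \pi_k$. Isolating the last term of the sum yields $\delta_k = \pi_k - \sum_{i=1}^{k-1}\delta_i$, which is exactly \eqref{eq:case2}. This completes the disjunction asserted by the lemma.

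There is essentially no hard part here; the result is a tautological consequence of complementary slackness and the stationarity condition already packaged into Lemma~\ref{lem:Variables}. The only mild subtlety worth flagging in the write-up is the index bookkeeping in the first case: the lemma is about the multiplier $\lambda_k$ with index $k \leq K-1$, whereas the relation from Lemma~\ref{lem:Variables} involves $\lambda_{k-1}$, so one must be careful to apply \eqref{eq:subsequent} at index $k+1$ rather than $k$ to conclude $\delta_k=\delta_{k+1}$.
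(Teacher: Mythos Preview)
Your proposal is correct and follows exactly the same approach as the paper: split via complementary slackness, invoke \eqref{eq:subsequent} from Lemma~\ref{lem:Variables} for the $\lambda_k=0$ case, and rearrange the active constraint $\sum_{i=1}^k \delta_i = \pi_k$ for the $\lambda_k>0$ case. Your explicit remark about applying \eqref{eq:subsequent} at index $k+1$ is a nice clarification that the paper leaves implicit.
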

\begin{proof}[Proof:\nopunct]
From the complementary slackness conditions in \eqref{eq:complementary_slackness}, the optimal solution must satisfy that either $\lambda_k=0$,
which results in \eqref{eq:case1} by substituting for $\lambda_k=0$ in \eqref{eq:subsequent}, or $\sum_{i=1}^{k} \delta_i - \pi_k = 0$, which can be re-arranged as in \eqref{eq:case2}.
These two possibilities correspond to satisfying the primal inequality constraints in \eqref{eq:threshold_main3} at strict inequality in the first case or strict equality in the second case.
The primal constraint in the latter case can be described as ``active''.
\end{proof}

%---------------------------------------------------------------------------------------------
\subsection{Hypothesis Testing Approach}
\label{sec:hypothesis}

The main bottleneck in solving \eqref{eq:dual_optimization} using the \ac{KKT} conditions is that we cannot know in advance which inequality constraints will be active and which ones will be inactive at the optimal solution. All possibilities are likely in the generic problem formulation in which the thresholds $\pi_k$ ($k=1,\dots,K-1$) can take any arbitrary ordered values. So, to overcome this bottleneck, we propose a hypothesis-testing approach to solve the \ac{KKT} conditions. 
We can start by guessing which inequality constraints in \eqref{eq:threshold_main3} will be active and which ones will be inactive. 
This will be a hypothesis that will be tested for validity.
Based on the initial assumption of the hypothesis, we can then obtain all primal optimization variables $\delta_k$ ($k = 1, 2, \dots, K$) based on \eqref{eq:case1} and \eqref{eq:case2} of Lemma~\ref{lem:two_possibilities}, in addition to \eqref{eq:all_clients3}. 
We will have $K$ independent equations and $K$ variables, so the solution of this system of equations will be unique.
After we obtain the solution, we can check for the validity of the hypothesis by examining two things. First, we check that there is no contradiction in the hypothesis's initial assumption by examining if all primal inequality constraints that were assumed to be inactive are indeed inactive after we substitute the obtained values of $\delta_k$ ($k = 1, 2, \dots, K$). 
Second, we check if the obtained $\delta_k$ ($k = 1, 2, \dots, K$) satisfy \eqref{eq:optimality_check}
of Lemma~\ref{lem:subsequent}. This check is for the optimality of the solution since the result in Lemma~\ref{lem:subsequent} is based on manipulations of the last \ac{KKT} condition about vanishing gradient of the Lagrangian. 
So, in summary, we need to make sure that the hypothesis is both (i) non-contradicting and (ii) generating the unique optimal solution of the problem. 

Since we have in total $K-1$ inequality constraints in \eqref{eq:threshold_main3}, we can have a maximum of $2^{K-1}$ hypotheses to be tested. In a practical deployment scenario,
the number of clusters $K$ is assumed to be limited to a single-digit number. So, checking all possible hypotheses would not be a prohibitively expensive step in such scenarios. 
Having said that, we also propose a search algorithm to find the correct hypothesis quickly. This can be particularly useful for scenarios of large $K$. 
However, before discussing the search algorithm, we will give numerical examples of hypothesis testing to clarify the concept. Then, we will formulate the hypothesis testing procedure in a systematic way that can be implemented in an algorithm.

\begin{example} [Three Scenarios of Hypothesis Testing]
\label{example:1}
Assume that $K=4$, $M=100$, $\pi_1=10$, $\pi_2=46$, and $\pi_3=80$. We can distinguish between three cases: 
\begin{itemize}
    \item Case A: Assuming that only the second inequality constraint is active is a contradicting hypothesis.
    \item Case B: Assuming that only the first and second inequality constraints are active is a non-contradicting but not an optimal hypothesis.
    \item Case C: Assuming that only the first inequality constraint is active is both a non-contradicting and optimal hypothesis. 
\end{itemize}
\end{example}

In Case A, since the second inequality constraint is assumed to be active, we should use \eqref{eq:case2} for $k=2$, and use \eqref{eq:case1} for $k=1$ and $k=3$. The system of four equations that should be solved is given as:
\begin{subequations}
\label{eq:Case_A}
\begin{alignat}{2}
    \delta_1 & =  \delta_2, \label{eq:k_1}\\
    \delta_2 & =  \pi_2 - \delta_1, \label{eq:k_2}\\
    \delta_3 & =  \delta_4, \label{eq:k_3}\\
   \delta_4 & =  M - (\delta_1+\delta_2+\delta_3). \label{eq:k_4}
\end{alignat}
\end{subequations}
The numerical solution of these equations yields $\delta_1=\delta_2=23$, and $\delta_3=\delta_4=27$. By checking the first inequality constraint, we will find a contradiction since 
$\delta_1=23 > \pi_1=10$. So, this is a contradicting hypothesis.

In Case B, we will have the same system of equations \eqref{eq:Case_A}, but with replacing \eqref{eq:k_1} by $\delta_1=\pi_1$. The solution of the system of four equations in this case yields $\delta_1=10$, $\delta_2=36$, and $\delta_3=\delta_4=27$. By checking for all inequality constraints, we do not find any contradiction. However, this solution is not optimal since $\delta_2=36 > \delta_3 = 27$, and this violates the optimality condition \eqref{eq:optimality_check} that was proven in Lemma~\ref{lem:subsequent}.

Finally, in Case C, we will have \eqref{eq:k_1} replaced by $\delta_1=\pi_1$, and \eqref{eq:k_2} replaced by $\delta_2=\delta_3$. The other two equations will be the same.
The solution in this case yields $\delta_1=10$, and $\delta_2=\delta_3=\delta_4=30$. This solution does not violate any inequality constraint and satisfies the optimality check in  \eqref{eq:optimality_check}. So, it is the optimal solution of the problem.

To formulate the hypothesis testing approach in a systematic algorithm, we use the notation $\mathbf{h}$ for a binary vector of $K$ entries that indicates which primal inequality constraints are assumed to be active by the hypothesis. The final entry in $\mathbf{h}$ is for the equality constraint \eqref{eq:all_clients3}, which is by definition always active. 
The entry of $\mathbf{h}$ that corresponds to the index $k$ of an active constraint is given the value $1$, and the entry that corresponds to the index of an inactive inequality is given the value $0$.
So, for Case A in the previous example we have $\mathbf{h}=(0,1,0,1)$. For Case B, we have $\mathbf{h}=(1,1,0,1)$, and for Case C, we have $\mathbf{h}=(1,0,0,1)$.

We also use the notation $\mathbf{p}(\mathbf{h})$ for a vector that is obtained from $\mathbf{h}$ by indicating the indices of the entries that have value $1$.
So, in our previous examples, if $\mathbf{h}=(0,1,0,1)$, then $\mathbf{p}=(2,4)$, and if $\mathbf{h}=(1,1,0,1)$, then $\mathbf{p}=(1,2,4)$, and if $\mathbf{h}=(1,0,0,1)$, then $\mathbf{p}=(1,4)$, and so on.
It should be noted that while the number of entries of $\mathbf{h}$ is fixed at $K$, the number of entries in $\mathbf{p}(\mathbf{h})$ varies depending on the sum of all entries of
the hypothesis $\mathbf{h}$. Furthermore, we use the notation $\mathbf{p}^{(i)}$ to indicate the i-th entry of $\mathbf{p}$. So, for example, if $\mathbf{p}=(1,2,4)$, then $\mathbf{p}^{(1)}=1$, $\mathbf{p}^{(2)}=2$ and $\mathbf{p}^{(3)}=4$.

Based on the introduced notations for $\mathbf{h}$ and $\mathbf{p}$, we can obtain that for a given hypothesis $\mathbf{h}$, the resulting system of equations
to obtain $\delta_k$, $k=1,\dots,K$, will have the following generalized formula to obtain the solution for any $\mathbf{p}(\mathbf{h})$:
\begin{equation}
\label{eq:solution_delta}
\delta_i = \frac{\pi_{\mathbf{p}^{(j)}}-\pi_{\mathbf{p}^{(j-1)}}}{\mathbf{p}^{(j)}-\mathbf{p}^{(j-1)}}, \quad \forall i \in \left\{\mathbf{p}^{(j-1)}+1, \dots, \mathbf{p}^{(j)} \right\},
\end{equation}   
where $j=1,2,\dots,|\mathbf{p}|$, $\mathbf{p}^{(0)}=0$, $\pi_0=0$ and $\pi_K=M$.
Implementing \eqref{eq:solution_delta} in an algorithm is straightforward by running a for-loop for the values of $j$, and obtaining all $\delta_i$'s that are associated with $j$ in \eqref{eq:solution_delta}.
After running through all $j$ values, all $\delta_i$, $i=1,\dots,K$ will be obtained.

%---------------------------------------------------------------------------------------------
\subsection{Quick Search Over Space Of Possible Hypotheses}
\label{sec:search}

To be able to find the non-contradicting and optimal hypothesis in the minimum number of steps, we need to make some theoretical characterization
of the space of possible hypotheses so that we can exclude many possibilities directly and find the optimal solution in a quick search.
 
\begin{theorem}[Inactive Primal Constraints]
\label{theorem:inactive_Constraints} 
If $\pi_k$ in the $k$-th inequality constraint in \eqref{eq:threshold_main3} satisfies
\begin{equation}
\label{eq:theorem1}
\pi_k > \frac{k}{K}M, \quad  k\in \lbrace 1, 2, \dots, K-1 \rbrace,
\end{equation}
then this constraint will be inactive at the optimal solution of the problem.
\end{theorem}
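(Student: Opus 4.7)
My plan is to prove the theorem by contradiction, leveraging the monotonicity result from Lemma~\ref{lem:subsequent} together with the equality constraint \eqref{eq:all_clients3}. The intuition is that activating a constraint whose threshold already exceeds the ``uniform'' fraction $\frac{k}{K}M$ would force the first $k$ cluster sizes to average strictly above $M/K$, which is incompatible with the $\delta_i$'s being sorted in non-decreasing order while summing to $M$.

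First I would suppose, for contradiction, that at the optimal solution the $k$-th inequality constraint in \eqref{eq:threshold_main3} is active, i.e.\ $\sum_{i=1}^k \delta_i = \pi_k$. Combined with the hypothesis \eqref{eq:theorem1}, this gives $\sum_{i=1}^k \delta_i > \tfrac{k}{K} M$. Next, I would invoke Lemma~\ref{lem:subsequent}, which guarantees $\delta_1 \leq \delta_2 \leq \cdots \leq \delta_K$ at any optimum. For any non-decreasing finite sequence, the average of the first $k$ terms is bounded above by the average of all $K$ terms; applying this with the equality constraint $\sum_{i=1}^K \delta_i = M$ from \eqref{eq:all_clients3} yields
\begin{equation*}
\frac{1}{k}\sum_{i=1}^k \delta_i \;\leq\; \frac{1}{K}\sum_{i=1}^K \delta_i \;=\; \frac{M}{K},
\end{equation*}
so $\sum_{i=1}^k \delta_i \leq \tfrac{k}{K} M$. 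This directly contradicts the previous strict inequality, so the assumed activation is impossible.

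The only non-routine step is the ``average-of-prefix $\leq$ average-of-whole'' inequality for a monotone sequence, which follows from a one-line rearrangement: writing $\sum_{i=1}^K \delta_i = \sum_{i=1}^k \delta_i + \sum_{i=k+1}^K \delta_i$, monotonicity gives $\sum_{i=k+1}^K \delta_i \geq (K-k)\delta_k \geq \tfrac{K-k}{k}\sum_{i=1}^k \delta_i$, from which the claim follows after multiplying through by $k$ and dividing by $kK$. With this small lemma in hand, the theorem falls out immediately, so I do not anticipate a real obstacle; the only subtlety is remembering to cite Lemma~\ref{lem:subsequent} to justify the monotonicity at the optimum (as opposed to an arbitrary feasible point), since without optimality there is no reason the $\delta_i$'s would be ordered.
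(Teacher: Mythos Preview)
Your proof is correct and follows essentially the same strategy as the paper's: a contradiction argument built on Lemma~\ref{lem:subsequent} (monotonicity of the $\delta_i$ at optimality) together with the equality constraint \eqref{eq:all_clients3}. The only cosmetic difference is that the paper bounds $\delta_k$ from below and $\delta_{k+1}$ from above separately to obtain $\delta_k > M/K > \delta_{k+1}$, whereas you package the same two monotonicity bounds into the single prefix-average inequality $\tfrac{1}{k}\sum_{i\le k}\delta_i \le \tfrac{1}{K}\sum_{i\le K}\delta_i$; the underlying computation is identical.
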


\begin{proof}[Proof:\nopunct]
Assume that \eqref{eq:theorem1} is true and the $k$-th inequality constraint is active. The latter means
\begin{equation}
    \label{eq:theorem1_1}
    \sum_{i=1}^{k} \delta_i = \pi_k.
\end{equation}
If the solution is optimal, then we must have \eqref{eq:optimality_check} satisfied as proven in  Lemma~\ref{lem:subsequent}.
Thus, we must have $\delta_k \geq \delta_{k-1} \geq \dots \geq \delta_1$. Consequently, we must have that $\delta_k \geq \frac{\pi_k}{k}$,
which can be bounded based on \eqref{eq:theorem1} to have
\begin{equation}
    \label{eq:theorem1_2}
    \delta_k \geq \frac{\pi_k}{k} > \frac{M}{K}. 
\end{equation}
Furthermore, by simple manipulations of the equality constraint in \eqref{eq:all_clients3} with \eqref{eq:theorem1_1}, we obtain 
\begin{equation}
    \label{eq:theorem1_3}
    \sum_{i=k+1}^{K} \delta_i = M - \pi_k.
\end{equation}
If the solution is optimal, then we must have \eqref{eq:optimality_check} satisfied, which yields
$\delta_K \geq \delta_{K-1} \geq \dots \geq \delta_{k+1}$. Consequently, we must have $\delta_{k+1} \leq \frac{M-\pi_k}{K-k}$,
which can be bounded based on \eqref{eq:theorem1} to have
\begin{equation}
    \label{eq:theorem1_4}
    \delta_{k+1} \leq \frac{M-\pi_k}{K-k} < \frac{M-\frac{k}{K}M}{K-k}=\frac{M\left(\frac{K-k}{K}\right)}{K-k}=\frac{M}{K}. 
\end{equation}
From \eqref{eq:theorem1_2} and \eqref{eq:theorem1_4}, we find that $\delta_k > \frac{M}{K} > \delta_{k+1}$. However,
based on the optimality check in \eqref{eq:optimality_check}, we must have $\delta_{k+1} \geq \delta_k$. So, there is a contradiction.
Thus, if \eqref{eq:theorem1} is true, then the associated constraint must be inactive at optimality.
\end{proof}

The statement in Theorem~\ref{theorem:inactive_Constraints} can help in excluding many hypotheses directly from the search algorithm. Furthermore,
we can narrow the space of possible hypotheses even further based on the following theorem.

\begin{theorem}[Generalized Check For Inactive Primal Constraints]
\label{theorem:Generalized} 
Assume that we have any arbitrary two integers $a$ and $b$ such that $1 \leq a <b \leq K$.
If $\pi_k$ associated with the $k$-th inequality constraint in \eqref{eq:threshold_main3} satisfies
\begin{equation}
\label{eq:theorem2}
\frac{\pi_k-\pi_a}{k-a} > \frac{\pi_b-\pi_a}{b-a}, \quad  k\in \lbrace a+1, a+2, \dots, b-1 \rbrace,
\end{equation}
then this constraint will be inactive at the optimal solution of the problem.
\end{theorem}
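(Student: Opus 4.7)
My approach is to mirror the proof of Theorem~\ref{theorem:inactive_Constraints}, generalizing the ``endpoints'' used to bracket the $k$-th constraint from the trivial choices $0$ and $K$ to the arbitrary indices $a$ and $b$ specified in the hypothesis. I would proceed by contradiction: assume that the $k$-th inequality constraint is active at the optimum, so that $\sum_{i=1}^{k}\delta_i=\pi_k$.

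The first step is to extract a lower bound on $\delta_k$. Combining the active-constraint equality with the primal feasibility $\sum_{i=1}^{a}\delta_i\leq\pi_a$ from \eqref{eq:threshold_main3} gives $\sum_{i=a+1}^{k}\delta_i\geq\pi_k-\pi_a$. Since Lemma~\ref{lem:subsequent} forces $\delta_{a+1}\leq\delta_{a+2}\leq\cdots\leq\delta_k$, the largest term $\delta_k$ dominates the average over this block, yielding $\delta_k\geq(\pi_k-\pi_a)/(k-a)$.

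Symmetrically, I would derive an upper bound on $\delta_{k+1}$. The inequality $\sum_{i=1}^{b}\delta_i\leq\pi_b$ when $b<K$, or the equality $\sum_{i=1}^{K}\delta_i=M$ together with the convention $\pi_K=M$ when $b=K$, combined with the active $k$-th constraint, gives $\sum_{i=k+1}^{b}\delta_i\leq\pi_b-\pi_k$. By the same ordering, the smallest term $\delta_{k+1}$ is at most the average over this block, so $\delta_{k+1}\leq(\pi_b-\pi_k)/(b-k)$.

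The last step is to reconcile these bounds with the hypothesis. A short algebraic manipulation of \eqref{eq:theorem2}, using $b-a=(b-k)+(k-a)$, shows that the stated inequality is equivalent to the three-point slope inequality $\frac{\pi_k-\pi_a}{k-a}>\frac{\pi_b-\pi_k}{b-k}$. Chaining this with the bounds above forces $\delta_k>\delta_{k+1}$, which contradicts Lemma~\ref{lem:subsequent}. I expect this algebraic equivalence to be the main technical point, though it is still routine; the only other wrinkle is the uniform handling of the $b=K$ case through $\pi_K:=M$, which can be dispatched in one line. Theorem~\ref{theorem:inactive_Constraints} is the natural specialization obtained by extending the argument to $a=0$ and $b=K$ with $\pi_0:=0$.
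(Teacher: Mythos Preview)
Your proposal is correct and follows essentially the same contradiction argument as the paper: both derive $\delta_k\geq(\pi_k-\pi_a)/(k-a)$ from the $a$-th primal constraint and $\delta_{k+1}\leq(\pi_b-\pi_k)/(b-k)$ from the $b$-th one, then use the slope hypothesis to force $\delta_k>\delta_{k+1}$, contradicting Lemma~\ref{lem:subsequent}. Your derivation of the upper bound by subtracting $\sum_{i\le k}\delta_i=\pi_k$ directly from $\sum_{i\le b}\delta_i\leq\pi_b$ is slightly more direct than the paper's detour through the equality constraint \eqref{eq:all_clients3}, and your use of the equivalent three-point slope inequality in place of the paper's intermediate comparison to $(\pi_b-\pi_a)/(b-a)$ is a cosmetic variation.
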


Before presenting the proof, it should be noted that the statement in Theorem~\ref{theorem:Generalized} is a generalized
version of the statement in Theorem~\ref{theorem:inactive_Constraints}. The latter is a special case in which we have $a=0$ and $b=K$, and similar to 
\eqref{eq:solution_delta}, we use the notation $\pi_0=0$ and $\pi_K=M$.

\begin{proof}[Proof:\nopunct]
Assume that both \eqref{eq:theorem2} is true and the $k$-th inequality constraint is active. The latter means that \eqref{eq:theorem1_1} and \eqref{eq:theorem1_3} are true
as discussed in the proof of Theorem~\ref{theorem:inactive_Constraints}.
Furthermore, we need to use the $a$-th and the $b$-th primal constraints in the construction of the proof. So, we have
\begin{subequations}
\label{eq:theorem2_1}
\begin{alignat}{2}
    \sum_{i=1}^{a} \delta_i & \leq  \pi_a, \label{eq:theorem2_1a}\\
    \sum_{i=1}^{b} \delta_i & \leq  \pi_b. \label{eq:theorem2_1b}
\end{alignat}
\end{subequations}
By simple manipulations of \eqref{eq:theorem1_1} and \eqref{eq:theorem2_1a}, we obtain that $\sum_{i=a+1}^{k} \delta_i \geq \pi_k - \pi_a$.  
Furthermore, if the solution is optimal, then we must have \eqref{eq:optimality_check} satisfied.
Thus, we must have $\delta_k \geq \delta_{k-1} \geq \dots \geq \delta_{a+1}$. Consequently, we must have that $\delta_k \geq \frac{\pi_k - \pi_a}{k-a}$,
which can be bounded based on \eqref{eq:theorem2} to have
\begin{equation}
\label{eq:theorem2_2}
\delta_k \geq \frac{\pi_k-\pi_a}{k-a} > \frac{\pi_b-\pi_a}{b-a}. 
\end{equation}
Next, we can do simple manipulations on the equality constraint in \eqref{eq:all_clients3} with \eqref{eq:theorem2_1b} to obtain that $\sum_{i=b+1}^{K} \delta_i \geq M - \pi_b$,
which can be combined with \eqref{eq:theorem1_3} to obtain
$\sum_{i=k+1}^{b} \delta_i \leq (M - \pi_k) - (M - \pi_b) = \pi_b-\pi_k$.
Moreover, if the solution is optimal, then we must have \eqref{eq:optimality_check} satisfied, which yields
$\delta_{b} \geq \delta_{b-1} \geq \dots \geq \delta_{k+1}$. Consequently, we must have that 
\begin{equation}
\label{eq:theorem2_3}
\delta_{k+1} \leq \frac{\pi_b-\pi_k}{b-k}.
\end{equation}
Looking back at \eqref{eq:theorem2}, we can re-write the inequality as 
$\pi_k > \pi_a + \frac{k-a}{b-a}(\pi_b-\pi_a)$,
which can be used to bound \eqref{eq:theorem2_3} as follows
\begin{alignat}{2}
    \delta_{k+1} & \leq  \frac{\pi_b-\pi_k}{b-k}  \nonumber \\
                   & < \frac{\pi_b-\pi_a - \frac{k-a}{b-a}(\pi_b-\pi_a)}{b-k} \nonumber \\
   & = \frac{(\pi_b-\pi_a)\left(1-\frac{k-a}{b-a}\right)}{b-k} \nonumber \\
  & = \frac{\pi_b - \pi_a}{b-a} \label{eq:theorem2_4}
\end{alignat}
From \eqref{eq:theorem2_2} and \eqref{eq:theorem2_4}, we find that $\delta_k > \frac{\pi_b - \pi_a}{b-a} > \delta_{k+1}$. However,
based on the optimality check in \eqref{eq:optimality_check}, we must have $\delta_{k+1} \geq \delta_k$. So, there is a contradiction.
Thus, if \eqref{eq:theorem2} is true, then the associated constraint must be inactive at optimality.
\end{proof}

\begin{figure}
    \centering
    \includegraphics[width=\linewidth]{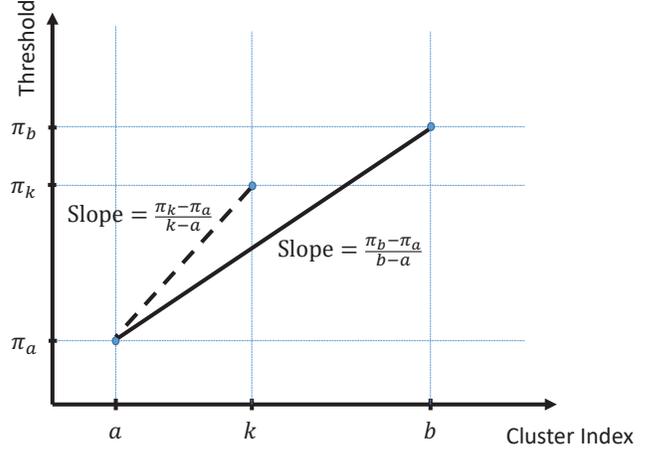}
    \caption{Geometric Interpretation of \eqref{eq:theorem2} in Theorem~\ref{theorem:Generalized}.}
    \label{fig:Theorem2}
\end{figure}

Actually, we can adopt a geometric interpretation of Theorem~\ref{theorem:Generalized} as shown in Fig.~\ref{fig:Theorem2},
since the left term of the inequality in \eqref{eq:theorem2} is the slope of the line connecting the two points $(a,\pi_a)$ and $(k,\pi_k)$,
while the right term is the slope of the line connecting the two points $(a,\pi_a)$ and $(b,\pi_b)$. Thus, we can equivalently reinterpret Theorem~\ref{theorem:Generalized} 
as stating that the $k$-th inequality constraint will be inactive at the optimal solution if the point $(k,\pi_k)$ is located above the line segment that connects
the two points $(a,\pi_a)$ and $(b,\pi_b)$. This statement is true for any value of $a<k$ and any value of $b>k$.  
Hence, we can establish a nice geometric interpretation of our problem. Actually, finding the active inequality constraints can be interpreted as
finding the lower convex hull in a two-dimensional Cartesian coordinate system for the set of inequality thresholds, represented as:
\begin{equation}
    \label{eq:set_of_thresholds}
    \mathcal{P} = \lbrace (k, \pi_k) : k=0,1,\dots,K \rbrace.
\end{equation}
 
Furthermore, the active constraints are only those that are located on the vertices of the piecewise linear boundary of the lower convex hull of $\mathcal{P}$,
as illustrated in Fig.~\ref{fig:Example1}, which shows a numerical example of $\mathcal{P}=\lbrace (0,0), (1,10), (2,46), (3,80), (4,100) \rbrace$.
This particular $\mathcal{P}$ is related to the previously discussed Example~\ref{example:1}. As shown in Fig.~\ref{fig:Example1},
the point $(1,10)$ is a vertex in the boundary of the lower convex hull of $\mathcal{P}$, while the points $(2,46)$ and $(3,80)$ are not located on the boundary
of the lower convex hull. Thus, the first constraint should be active at the optimal solution, while the second and third constraints should be inactive.
This conclusion was already verified in Example~\ref{example:1}. So, the geometric interpretation is consistent with the analytical algebraic derivations. 

\begin{figure}
    \centering
    \includegraphics[width=.8\linewidth]{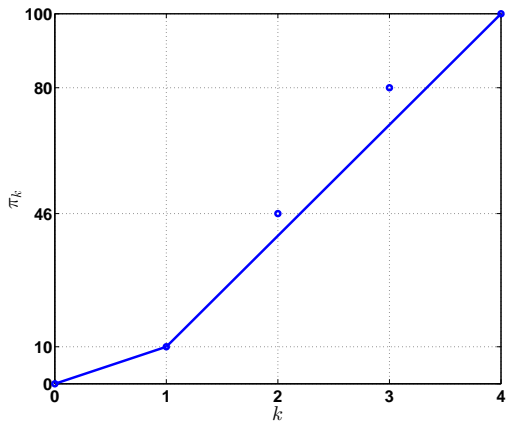}
    \caption{The lower convex hull for Example~\ref{example:1}.}
    \label{fig:Example1}
\end{figure}

The significance of the geometric interpretation is that it enables us to define an efficient and quick procedure to find the optimal solution by
searching for the boundary of the lower convex hull of $\mathcal{P}$ of our problem. This can be done sequentially segment-by-segment 
starting from the lower-left corner point $(0,0)$ until we reach the upper right corner point $(K,M)$, as summarized in Algorithm~\ref{alg:optimization}.

\begin{algorithm}[htbp]
\caption{Optimal Solution of Problem \eqref{eq:main_optimization3}}
\label{alg:optimization}
\begin{algorithmic}[1]
\renewcommand{\algorithmicrequire}{\textbf{Input:}}
\renewcommand{\algorithmicensure}{\textbf{Output:}}
\REQUIRE $\pi_k, \forall k \in  \lbrace 0, 1, \dots, K \rbrace$, where $\pi_0=0$ and $\pi_K=M$.
\ENSURE  The optimal solution of \eqref{eq:main_optimization3}: $\delta_k, \forall k \in  \lbrace 1, \dots, K \rbrace$
%\\ \textit{Initialisation} :
\STATE Initialize $a=0$, $\pi_a=0$.
%\\ \textit{LOOP Process}
\WHILE {$a \neq K$}
    \STATE Find $b^*$ that solves the problem 
    \begin{equation*}
        \min_{b\in \lbrace a+1, \dots, K \rbrace} \frac{\pi_b-\pi_a}{b-a}. 
    \end{equation*}
    \STATE Use the obtained $b^*$ to set the output values 
    \begin{equation*}
        \delta_k = \frac{\pi_{b^*}-\pi_a}{b^*-a}, \quad \forall k \in \lbrace a+1, \dots, b^* \rbrace.
    \end{equation*}
    \STATE Update $a=b^*$, $\pi_a=\pi_{b^*}$.
    %  \IF {($i \ne 0$)}
    % \STATE statement..
    % \ENDIF
\ENDWHILE
%\RETURN $P$ 
\end{algorithmic} 
\end{algorithm}

%---------------------------------------------------------------------------------------------
\section{Performance Analysis and Comparison}
\label{sec:simulations}

In this section, as a numerical example, we compare the performance of a conventional federated learning scheme (in which $N$ clients are randomly selected per iteration $t$ regardless of their number of data samples) with our suggested \ac{PFL} scheme (which follows \ALG{alg:csgd} and the solution in \ALG{alg:optimization}). We analyze the proposed \ac{PFL} method using different datasets such as the MNIST \cite{lecun2010mnist}, the federated EMNIST \cite{cohen2017emnist}, and the federated Shakespeare \cite{main_FL,shakespeare2007complete} datasets and neural network architectures such as \ac{MLP}, \ac{CNN}, and \ac{RNN} to cover a wide range of federated learning applications from the literature. We used Python programming language and Tensorflow Federated framework \cite{tensorflow_federated} for the simulations.

Without loss of generality, we assume a scenario in which all clients have similar processors' speeds, and the variation in computation time $\tau_m$ is solely dependent on the number of data sample per client, i.e., $\tau_m = \sigma n_m$,
% \begin{equation}
%     \tau_m = \sigma n_m,
% \end{equation}
where $\sigma$ is the computation time per data sample.  

\subsection{MNIST Dataset}

First, we have validated the proposed method using the MNIST dataset \cite{lecun2010mnist}, which consists of hand-written digits. In the simulation scenario, we assumed that there are $M=1500$ clients. We assumed that the number of data samples (i.e., images of hand-written digits) per client $n_m$ is uniformly distributed between $n_{\min}=10$ samples and $n_{\max}=70$ samples. We compared the results of \ac{PFL} to the conventional federated learning scheme where there is only one cluster. We considered four possible scenarios for the ratio between the variation in computation time to communication time in \eqref{eq:choosing_K}. We performed the simulations for $1$, $2$, $3$, and $4$ clusters, respectively. We also studied the effect of the number of sub-channels on the performance of \ac{PFL}. Hence, we varied the number of sub-channels as $1$, $2$, $4$, and $8$, respectively. Each sub-channel can be used by a single client only at a given time instant.
We used two different deep learning models for the training of the MNIST dataset. The first model is a two-layer \ac{CNN} model with $32$ and $64$ $5\times 5$ filters, respectively.
The second model is an \ac{MLP} model with two layers with $200$ nodes in each layer.
We run the training algorithm for up to $1000$ rounds of global model (server) updates. We obtained the model prediction accuracy by testing the trained model on a separate test dataset as a function of the number of communication rounds.
The batch size is chosen as $16$, and the number of local epochs is set to $1$. The client learning rate is optimized using a multiplicative parameter grid, and the server learning rate (the scaling factor) is set to $1$.
The number of required rounds to reach $97\%$ accuracy for the \ac{CNN} model trained on the MNIST dataset is shown in \TAB{tab:mnist-cnn}. Similarly, the number of required rounds to reach $96\%$ accuracy for the \ac{MLP} model trained on the MNIST dataset is shown in \TAB{tab:mnist-mlp}. 
% The evolution of the test accuracy with respect to the number of communication rounds is provided in \FIG{fig:mnist-cnn} and \FIG{fig:mnist-mlp} for training with the \ac{CNN} and \ac{MLP} models, respectively. The test accuracy values in both figures represent the maximum test accuracy obtained until that round.

\begin{table}[htbp]
    \centering
    \caption{Number of required rounds to reach $97\%$ accuracy for the CNN model on the MNIST dataset. Columns represent the number of clusters and rows represent number of sub-channels}
    \label{tab:mnist-cnn}
    \begin{tabular}{c|c|c|c|c}
        K\textbackslash N & 1 & 2 & 4 & 8 \\
        \hline
        1 & 273 (0\%)  & 180 (0\%)  & 152 (0\%)  & 118 (0\%) \\
        2 & 186 (32\%) & 129 (28\%) & 109 (28\%) & 106 (10\%) \\
        3 & 139 (49\%) & 129 (28\%) & 114 (25\%) & 100 (15\%) \\
        4 & 131 (52\%) & 122 (32\%) & 105 (30\%) & 97 (18\%) \\
    \end{tabular}
\end{table}

% \begin{figure}[htbp]
% 	\centering
% 	\includegraphics[width=\figscale\linewidth]{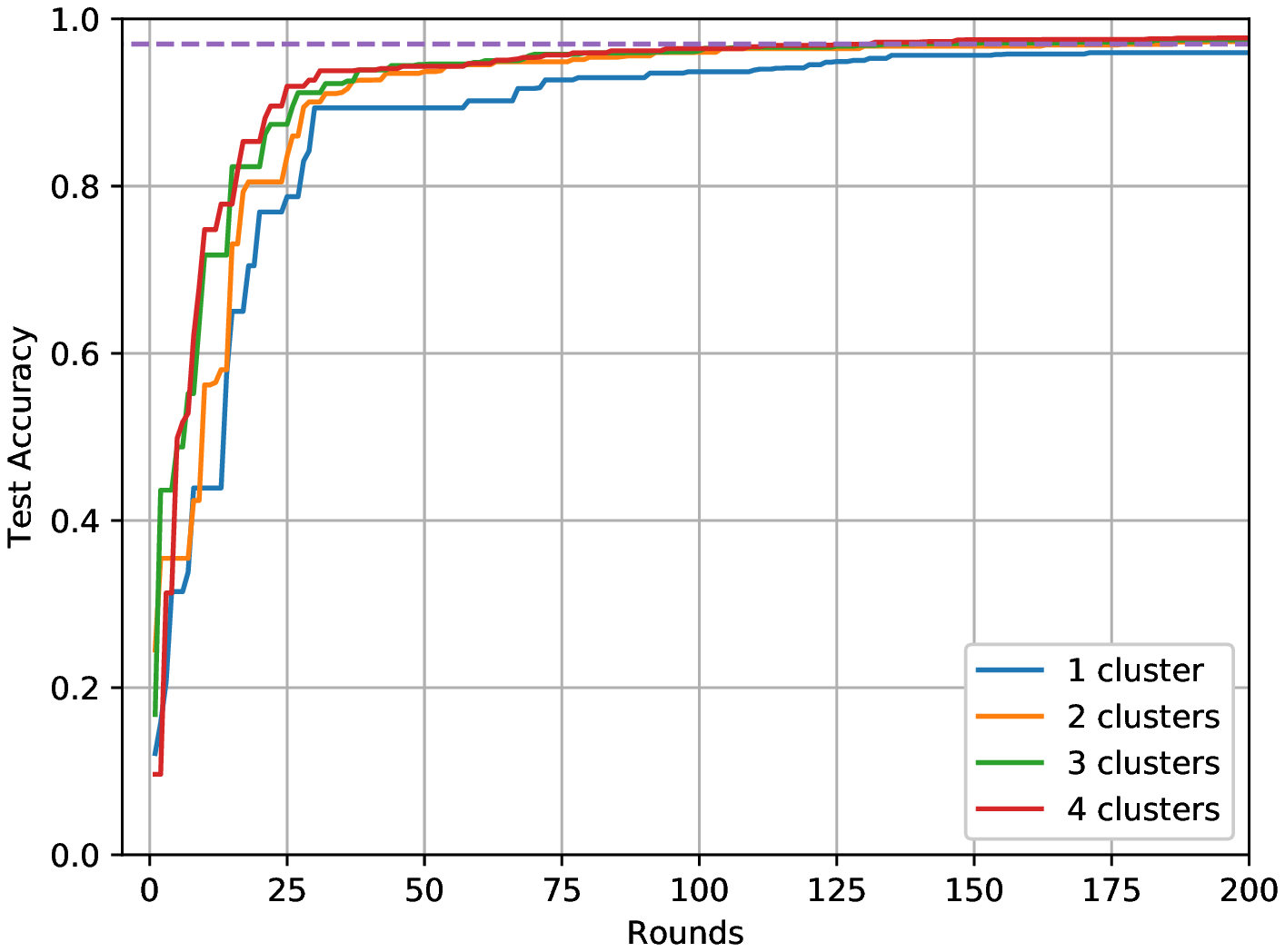}
% 	\caption{Accuracy vs. communication round for MNIST dataset trained with CNN model.}
% 	\label{fig:mnist-cnn}
% \end{figure}

\begin{table}[htbp]
    \centering
    \caption{Number of required rounds to reach $96\%$ accuracy for the MLP model on the MNIST dataset. Columns represent the number of clusters and rows represent number of sub-channels}
    \label{tab:mnist-mlp}
    \begin{tabular}{c|c|c|c|c}
        K\textbackslash N & 1 & 2 & 4 & 8 \\
        \hline
        1 & 981 (0\%)  & 767 (0\%)  & 528 (0\%)  & 445 (0\%) \\
        2 & 754 (23\%) & 514 (32\%) & 480 (9\%) & 436 (2\%) \\
        3 & 613 (38\%) & 451 (41\%) & 479 (9\%) & 432 (3\%) \\
        4 & 506 (48\%) & 472 (38\%) & 443 (16\%) & 424 (5\%) \\
    \end{tabular}
\end{table}

% \begin{figure}[htbp]
% 	\centering
% 	\includegraphics[width=\figscale\linewidth]{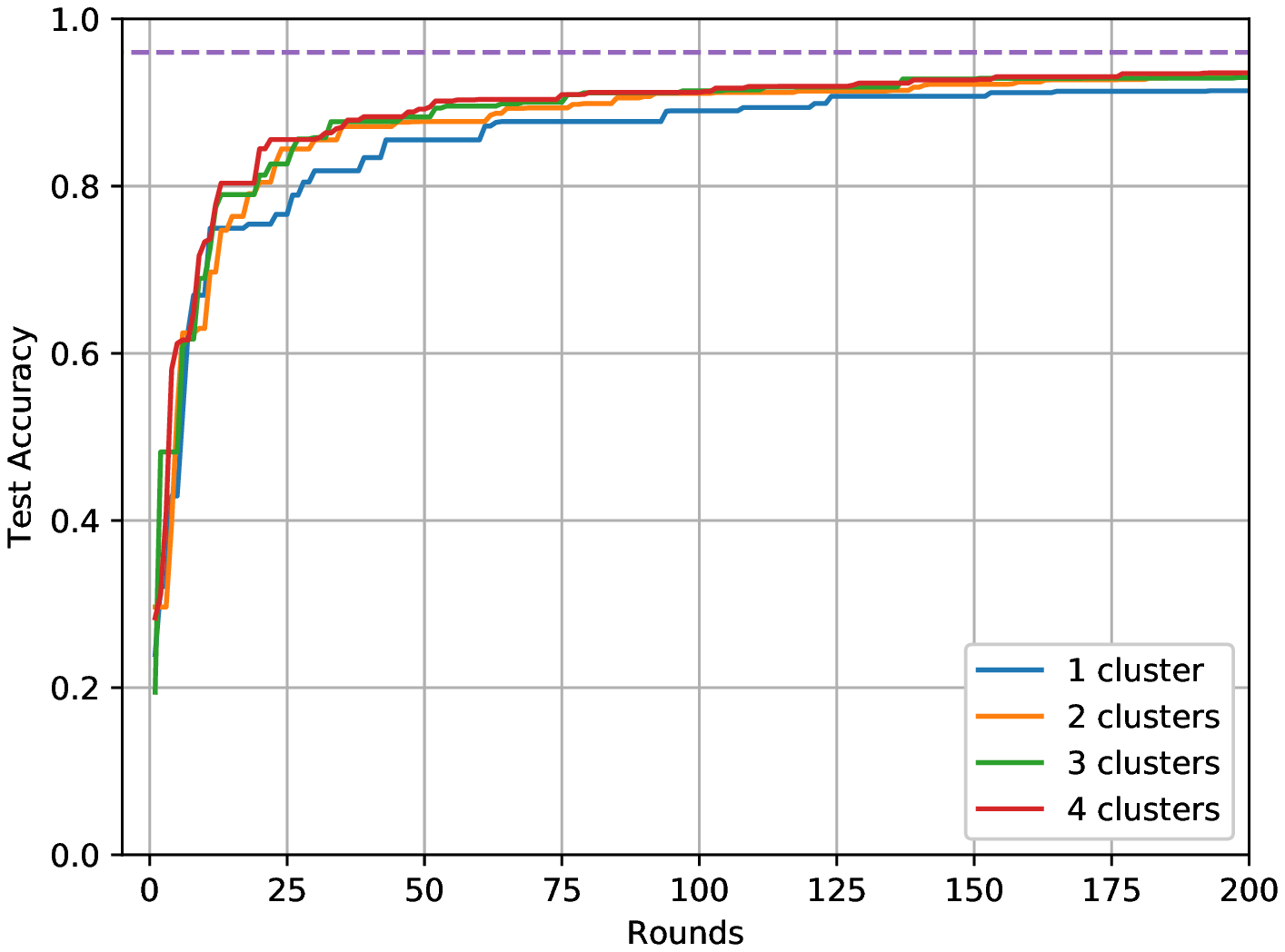}
% 	\caption{Accuracy vs. communication round for MNIST dataset trained with MLP model.}
% 	\label{fig:mnist-mlp}
% \end{figure}

As seen in \TAB{tab:mnist-cnn} and \TAB{tab:mnist-mlp}, applying clustered scheduling enables achieving faster convergence (in terms of the number of iterations) in comparison with a baseline federated learning approach with no clustering.
The conventional federated learning scheme with the \ac{CNN} model achieves $97\%$ accuracy in $273$, $180$, $152$, and $118$ rounds with $1$, $2$, $4$, and $8$ sub-channels, respectively. The proposed \ac{PFL} scheme provides $32\%$ gain compared to the conventional federated learning scheme in terms of required rounds by employing only $2$ clusters. The gain provided by clustering is improved for each additional cluster in the system. The marginal gain provided by the \ac{PFL} scheme decreases with the increasing number of sub-channels since the effective number of data samples per communication round is already enough for the conventional federated learning.

The benefit of the proposed clustered pipelining scheme can be realized in a different way by decreasing the bandwidth requirements. Thus, we can decrease the number of sub-channels needed to achieve the desired accuracy value. For instance, by using $4$ clusters with $1$ sub-channel, we reach the same performance as the one achieved by the conventional federated learning with $4$ sub-channels.

The \ac{MLP} model converges slower compared to the \ac{CNN} model. The conventional federated learning scheme achieves $96\%$ accuracy in $981$, $767$, $528$, and $445$ rounds with $1$, $2$, $4$, and $8$ sub-channels, respectively. Similar to the results with the \ac{CNN} model, introducing clustering into the federated learning setup decreases the number of required training rounds significantly. However, the \ac{PFL} method provides little to no gain compared to the conventional federated learning scheme if the number of sub-channels is high, e.g., if the number of sub-channels is $8$ in our simulation setup.

\subsection{Federated EMNIST Dataset}

In this section, we use the extended version of the MNIST dataset, namely, EMNIST \cite{cohen2017emnist}. We use the federated EMNIST dataset in which the data is already distributed to the clients by their writing styles. The EMNIST dataset has letters additionally; however, we will use only the digits dataset. The federated EMNIST digits dataset has a total of $341,873$ train and $40,832$ test samples distributed among $3,383$ clients. There are $10$ classes in the dataset, each corresponding to a hand-written digit. The data distribution among the clients is shown in \FIG{fig:emnist-histogram}. As seen from \FIG{fig:emnist-histogram}, the distribution of the number of data samples among the clients is unbalanced and not uniform.
We formulate the client clustering problem as shown in \eqref{eq:main_optimization3} for clustering the clients for the federated EMNIST dataset, and then we solve the problem efficiently using \ALG{alg:optimization}. Accordingly, in the training step, we first cluster the clients using the proposed optimization problem. Then, we train the model using the \ac{PFL} scheme. We performed the simulations for $1$, $2$, $3$, and $4$ clusters and $1$, $2$, $4$, and $8$ sub-channels, respectively. We used a two-layer \ac{CNN} model with $32$ and $64$ $5\times 5$ filters as the neural network model. The batch size is chosen as $16$, and the learning rate is optimized using a multiplicative parameter grid. The number of local epochs is chosen as $1$ in order to assess the convergence speed. The required number of rounds to achieve $97\%$ accuracy for the EMNIST digits dataset is given in \TAB{tab:emnist}.

\begin{figure}[htbp]
    \centering
    \includegraphics[width=\figscale\linewidth]{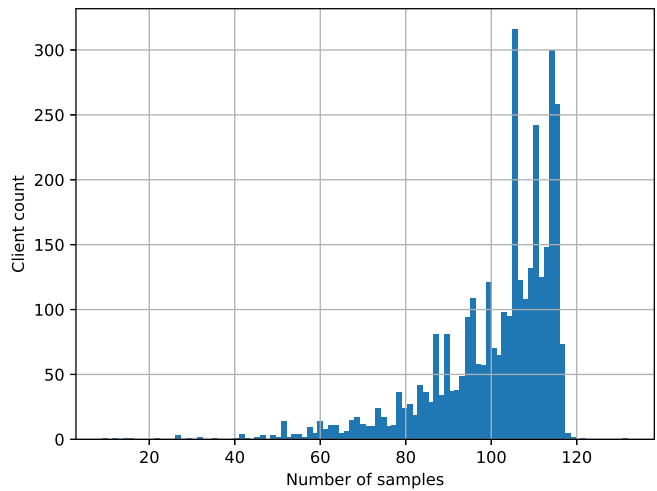}
    \caption{Data distribution in the federated EMNIST dataset.}
    \label{fig:emnist-histogram}
\end{figure}

\begin{table}[htbp]
    \centering
    \caption{Number of required rounds to reach $97\%$ accuracy for the CNN model on EMNIST dataset. Columns represent the number of clusters and rows represent number of sub-channels}
    \label{tab:emnist}
    \begin{tabular}{c|c|c|c|c}
        K\textbackslash N & 1 & 2 & 4 & 8 \\
        \hline
        1 & 686 (0\%)  & 221 (0\%)  & 189 (0\%)  & 170 (0\%) \\
        2 & 274 (60\%) & 184 (17\%) & 178 (6\%)  & 160 (6\%) \\
        3 & 191 (72\%) & 182 (18\%) & 158 (16\%) & 158 (7\%) \\
        4 & 191 (72\%) & 175 (21\%) & 156 (17\%) & 160 (6\%) \\
    \end{tabular}
\end{table}

The number of communication rounds required is $686$ for the $1$ sub-channel and $1$ cluster case. The number of communication rounds drops by $64\%$ to $274$ rounds by introducing $1$ additional cluster. The number of communication rounds gets lower and lower with the increasing number of clusters. The gain from the proposed \ac{PFL} scheme decreases if the number of available sub-channels increases. Therefore, we could conclude that the proposed scheme may not be feasible if the number of sub-channels is sufficiently large, e.g., more than $4$ sub-channels in this case.
% The test accuracy vs. the number of global communication rounds for the federated EMNIST dataset is provided in \FIG{fig:emnist}. The test accuracy values in \FIG{fig:emnist} represent the maximum test accuracy obtained until that round. Using the proposed pipelining scheme in the training of the federated EMNIST dataset significantly improves the convergence speed of the model. 
As seen from \TAB{tab:emnist}, the model converges faster for a higher number of clusters in the system. However, the marginal gain obtained by each additional cluster decreases with the number of clusters already present in the system.

% \begin{figure}[htbp]
% 	\centering
% 	\includegraphics[width=\figscale\linewidth]{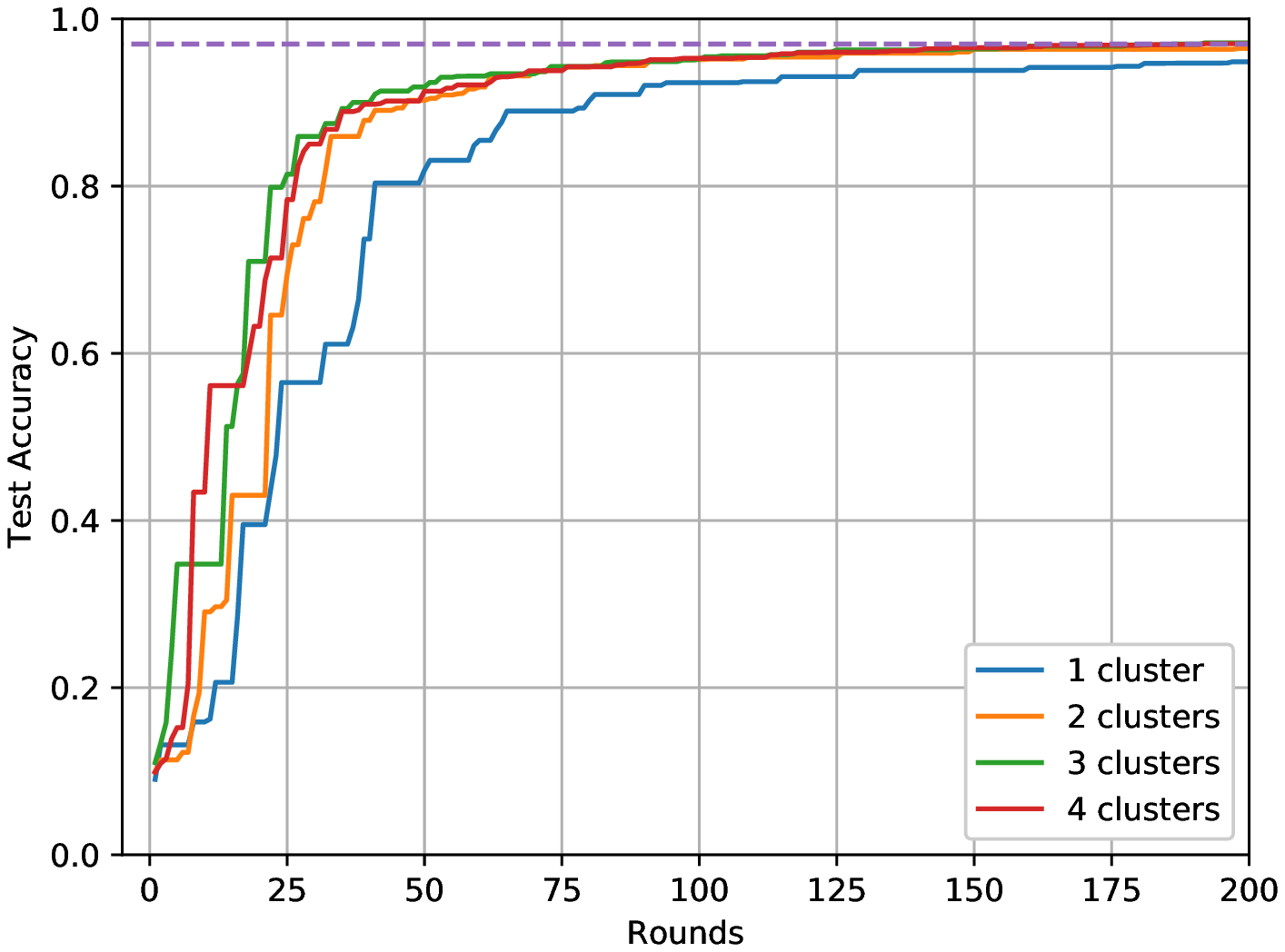}
% 	\caption{Accuracy vs. communication round for the federated EMNIST dataset trained with CNN model.}
% 	\label{fig:emnist}
% \end{figure}

\subsection{Federated Shakespeare Dataset}
For modeling time-series data, we used federated Shakespeare dataset \cite{main_FL,shakespeare2007complete} for training a stacked two-layer \ac{LSTM} network. Federated Shakespeare dataset is built by combining all works of Shakespeare.
The dataset consists of the speaking roles of $715$ characters in each play, corresponding to the clients in our federated learning setup. The dataset is unbalanced since most of the plays have a few lines, and a small number of clients have a high count of lines in their plays. Additionally, the number of data samples is not distributed uniformly to the clients. It is also clear that the dataset is not \ac{iid} since the dataset is distributed to the clients according to the played character.
% The data distribution among the clients is shown in \FIG{fig:shakespeare-histogram}.
We use \ALG{alg:optimization} for determining the clusters for the clients. Then, we train the model using the proposed \ac{PFL} scheme. 
The neural network model used for this task has an embedding layer at the input, which embeds the input characters into an $8$ dimensional space. After the embedding layer, it has two \ac{LSTM} layers with $256$ hidden nodes. Finally, it has an output layer with softmax activation with an output for each character.
In training, we set the batch size to be $10$ and optimized the learning rate through a multiplicative parameter grid.
The simulation results for the application of \ac{PFL} for the training of the two-layer \ac{LSTM} network on the Shakespeare dataset are given in \TAB{tab:shakespeare}. The rows represent the number of clusters, and the columns represent the number of sub-channels. The values are given in terms of the number of rounds required to reach $40\%$ accuracy in the training. 
By examining the simulation results for both federated the EMNIST and the Shakespeare datasets, we conclude that the proposed pipelining scheme improves the results for non-uniformly distributed data more significantly.

% \begin{figure}[htbp]
%     \centering
%     \includegraphics[width=\figscale\linewidth]{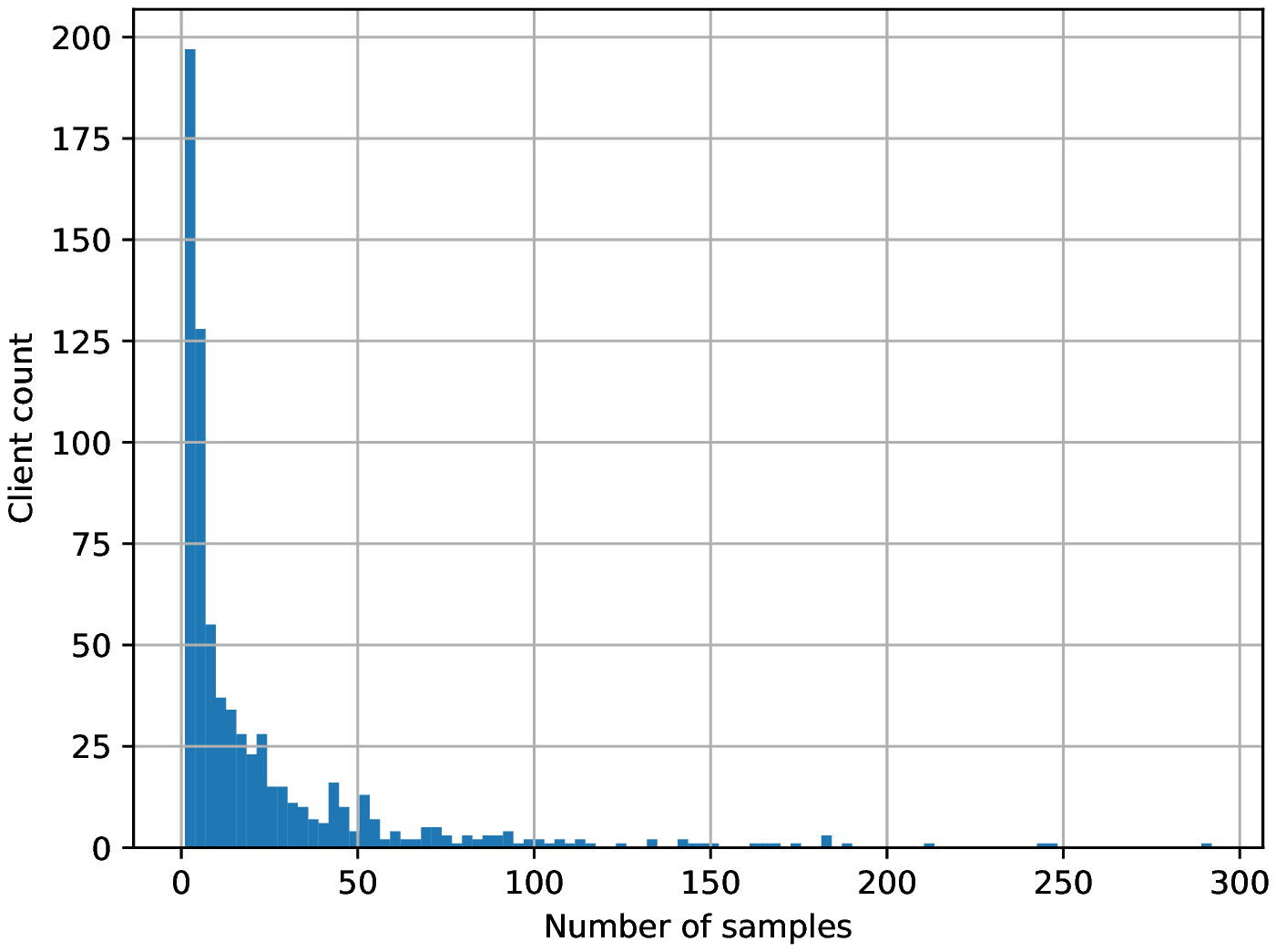}
%     \caption{Data distribution in the federated Shakespeare dataset.}
%     \label{fig:shakespeare-histogram}
% \end{figure}

\begin{table}[htbp]
    \centering
    \caption{Number of required rounds to reach $40\%$ accuracy for the two-layer LSTM model on Shakespeare dataset. Columns represent the number of clusters and rows represent number of sub-channels}
    \label{tab:shakespeare}
    \begin{tabular}{c|c|c|c|c}
        K\textbackslash N & 1 & 2 & 4 & 8 \\
        \hline
        1 & 1182 (0\%) & 661 (0\%)  & 429 (0\%)  & 354 (0\%) \\
        2 & 605 (49\%) & 379 (43\%) & 273 (36\%) & 213 (40\%) \\
        3 & 419 (65\%) & 282 (57\%) & 239 (44\%) & 198 (44\%) \\
        4 & 372 (69\%) & 257 (61\%) & 215 (50\%) & 190 (46\%) \\
    \end{tabular}
\end{table}

% \subsection{Training Stability}

% The proposed \ac{PFL} scheme also improves the training stability for the models. The \ac{PFL} scheme increases the effective number of data samples per communication round by clustering the clients. By using more data for the gradient estimation, the actual value of the gradients could be estimated more accurately in each round. Therefore, the stability of the training is improved by each additional cluster in the system. In order to show the training stability, we plotted the test accuracy with respect to the number of communication rounds without taking the maximum value up to that round. In this way, the ripples in the test accuracy are more apparent. As an example, the raw test accuracy values for the federated EMNIST dataset trained with a \ac{CNN} is shown in \FIG{fig:stability-emnist}. As seen from \FIG{fig:stability-emnist}, the amount of ripples decreases with the larger number of clusters. Thus, we achieve a more stable model by employing the proposed \ac{PFL} scheme.

% \begin{figure}[htbp]
% 	\centering
% 	\includegraphics[width=\figscale\linewidth]{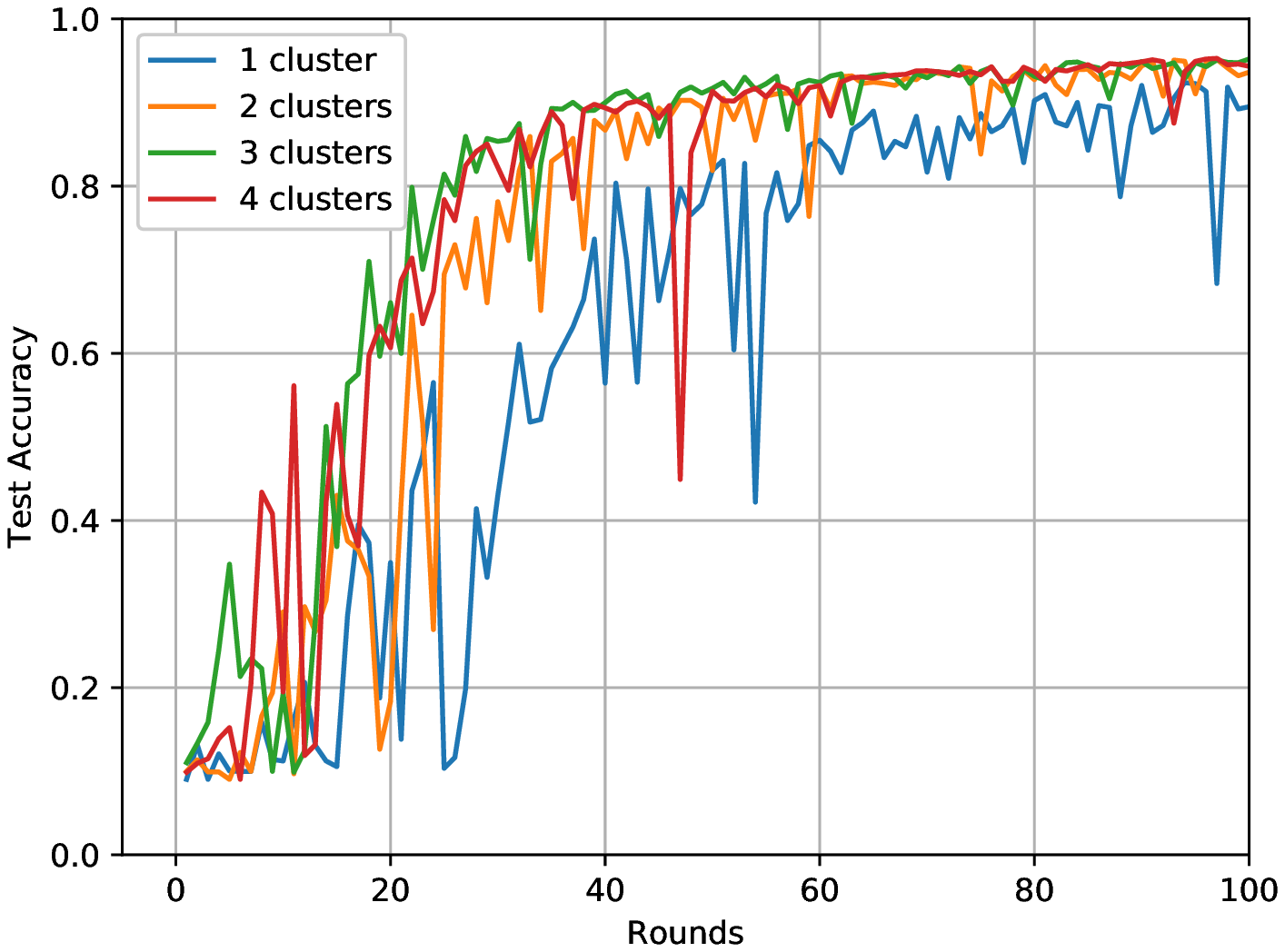}
% 	\caption{The evolution of the test accuracy with respect to communication rounds for the federated EMNIST dataset.}
% 	\label{fig:stability-emnist}
% \end{figure}

%---------------------------------------------------------------------------------------------
\section{Conclusions}
\label{sec:conclusions}

One of the main challenges of federated learning is the heterogeneity across the participating clients in terms of their computation duration and the size of the local data set. 
To mitigate this challenge, we have proposed using communication pipelining to enhance the spectrum utilization efficiency and speed up the convergence of federated learning. The main idea is to cluster clients based on their computation time and then schedule a balanced mixture of clients from the different clusters.
The faster clients access the spectrum first, while the slower clients are still running their computation.
By allowing more clients to be involved in each iteration, the time required to converge into a targeted prediction accuracy of the trained model can be reduced significantly.
The time per iteration does not change in comparison with conventional federated learning. Nevertheless, the required number of iterations and the total time to train the model are reduced.
We provided a generic formulation for the optimal client clustering under different settings and an efficient algorithm for obtaining the optimal solution.
We demonstrated that the required number of communication rounds also depends on the number of sub-channels, and the gain obtained from the proposed method is lower for a high number of sub-channels.
% The proposed method also provides better training stability due to the involvement of more data in each round.

\bibliography{main}
\bibliographystyle{IEEEtran}

\end{document}